\newif\ifsup\suptrue
\newcommand{\supp}{\text{supp}}
\newcommand{\KL}{\text{KL}}
\title{High-Dimensional Sparse Linear Bandits}
\author{%
  Botao Hao\\
  Deepmind \\
  \texttt{haobotao000@gmail.com} \\
   \And
   Tor Lattimore \\
   Deepmind \\
   \texttt{lattimore@google.com} \\
   \AND
   Mengdi Wang \\
   Department of Electrical Engineering\\
  Princeton University \\
   \texttt{mengdiw@princeton.edu} \\
}
\begin{document}

\maketitle

\begin{abstract}
Stochastic linear bandits with high-dimensional sparse features are a practical model for a variety of domains,
including personalized medicine and online advertising \citep{bastani2020online}. 
We derive a novel $\Omega(n^{2/3})$ dimension-free minimax regret lower bound for sparse linear bandits in the data-poor  
regime where the horizon is smaller than the ambient dimension and where the feature vectors admit a well-conditioned exploration distribution.
This is complemented by a nearly matching upper bound for an explore-then-commit algorithm showing that that $\Theta(n^{2/3})$ is the optimal rate
in the data-poor regime.
The results complement existing bounds for the data-rich regime and provide another example where carefully balancing 
the trade-off between information and regret is necessary.   
Finally, we prove a dimension-free $\cO(\sqrt{n})$ regret upper bound under an additional assumption on the magnitude of the signal for relevant features.
\end{abstract}
\section{Introduction}

Stochastic linear bandits generalize the standard reward model for multi-armed bandits by associating each action with a feature vector and assuming the mean reward is
the inner product between the feature vector and an unknown parameter vector
\citep{auer2002using, dani2008stochastic, rusmevichientong2010linearly, chu2011contextual, abbasi2011improved}. 

In most practical applications, there are many candidate features but no clear indication about which are relevant. Therefore, it is crucial to consider stochastic linear bandits 
in the high-dimensional regime but with low-dimensional structure, captured here by the notion of sparsity. 
Previous work on sparse linear bandits has mostly focused on the data-rich regime, where the time horizon is larger than the ambient dimension
\citep{abbasi2012online, carpentier2012bandit, wang2018minimax, kim2019doubly, bastani2020online}. The reason for studying the data-rich regime is partly justified by 
minimax lower bounds showing that for smaller time horizons the regret is linear in the worst case.

Minimax bounds, however, do not tell the whole story. A crude maximisation over all environments hides much of the rich structure of linear bandits with sparsity.
We study sparse linear bandits in the high-dimensional regime when the ambient dimension is much larger than the time horizon. In order to sidestep existing lower bounds,
we refine the minimax notion by introducing a dependence in our bounds on the minimum eigenvalue of a suitable exploration distribution over the actions. Similar quantities appear already in the
vast literature on high-dimensional statistics \citep{buhlmann2011statistics,wainwright2019high}.

\paragraph{Contributions}
Our first result is a lower bound showing that $\Omega(n^{2/3})$ regret is generally unavoidable when the dimension is large, even if the action set 
admits an exploration policy for which the minimum eigenvalue of the associated data matrix is large.
The lower bound is complemented by an explore-the-sparsity-then-commit algorithm that first solves a convex optimization problem to find the most informative design in
the exploration stage. The algorithm then explores for a number of rounds by sampling from the design distribution and uses Lasso  \citep{tibshirani1996}
to estimate the unknown parameters. Finally, it
greedily chooses the action that maximizes the reward given the estimated parameters. We derive an $\cO(n^{2/3})$ dimension-free regret
that depends instead on the minimum eigenvalue of the covariance matrix associated with the exploration distribution. Our last result is a 
post-model selection linear bandits algorithm that invokes phase-elimination algorithm \citep{lattimore2019learning} to the model selected 
by the first-step regularized estimator. Under a sufficient condition
on the minimum signal of the feature covariates, we prove that a dimension-free $\cO(\sqrt{n})$ regret is achievable, even if the data is scarce.

The analysis reveals a rich structure that has much in common with partial monitoring, where $\Theta(n^{2/3})$ regret occurs naturally in settings for which
some actions are costly but highly informative \citep{BFPRS14}. A similar phenomenon appears here when the dimension is large relative to the horizon.
There is an interesting transition as the horizon grows, since $\cO(\sqrt{dn})$ regret is optimal in the data rich regime.

\begin{table}\label{table:comparsion}
\centering
\caption{Comparisons with existing results on regret upper bounds and lower bounds for sparse linear bandits. Here, $s$ is the sparsity, $d$ is the feature dimension, $n$ is the number of rounds, $K$ is the number of arms, $C_{\min}$ is the minimum eigenvalue of the data matrix for an exploration distribution (\ref{def:minimum_eigen}) and $\tau$ is a problem-dependent parameter that may have a complicated form and vary across different literature.}

\scalebox{0.94}{
\begin{tabular}{ |l|c|c|c| } 
 \hline
 \textbf{Upper Bound}& Regret& Assumptions & Regime\\ 
 \hline
 \cite{abbasi2012online} & $\cO(\sqrt{sdn})$ & none &  rich \\ 
 \hline
 \cite{sivakumar2020structured} & $\cO(\sqrt{sdn})$ & adver. + Gaussian noise&  rich \\ 
  \hline
 \cite{bastani2020online} & $\cO(\tau K s^2(\log(n))^2)$ & compatibility condition&  rich \\
 \hline
  \cite{wang2018minimax}& $\cO(\tau K s^3\log (n))$ & compatibility condition &  rich\\
 \hline
  \cite{kim2019doubly} & $\cO(\tau s\sqrt{n})$ & compatibility condition &  rich\\ 
 \hline
 \cite{lattimore2015linear}
  &$\cO(s\sqrt{n})$& action set is hypercube &  rich\\
  \hline
   This paper (Thm. \ref{thm:upper_bound})
  &$\cO(C_{\min}^{-2/3}s^{2/3}n^{2/3})$& action set spans $\mathbb R^d$ &  poor\\
  \hline
   This paper (Thm. \ref{thm:improved_upper_bound})
  &$\cO(C_{\min}^{-1/2}\sqrt{sn })$& action set spans $\mathbb R^d$ + mini. signal & rich\\
  \hline
 \textbf{Lower Bound}&& &   \\ 
 \hline
 Multi-task bandits\tablefootnote{Section 24.3 of \cite{lattimore2018bandit}} & $\Omega(\sqrt{sdn})$ & N.A.&  rich  \\ 
 \hline
  This paper (Thm. \ref{thm:minimax_lower_bound}) & $\Omega(C_{\min}^{-1/3}s^{1/3}n^{2/3})$ & N.A. &  poor  \\ 
 \hline
\end{tabular}}
\vspace{0.1in}

\end{table}

\paragraph{Related work}
Most previous work is focused on the data-rich regime. For an arbitrary action set,  \cite{abbasi2012online} proposed an 
online-to-confidence-set conversion approach that achieves a $\cO(\sqrt{sdn})$ regret upper bound, where $s$ is a known upper bound on the sparsity. 
The algorithm is generally not computationally efficient, which is believed to be unavoidable.
Additionally, a $\Omega(\sqrt{sdn})$ regret lower bound for data-rich regime was established in Section 24.3 of \cite{lattimore2018bandit}, which means polynomial dependence on $d$ is generally not avoidable without additional assumptions. 

For this reason, it recently
became popular to study the contextual setting, where the action set changes from round to round and to
careful assumptions are made on the context distribution. The assumptions are chosen so that techniques from
high-dimensional statistics can be borrowed. Suppose $\tau$ is a problem-dependent parameter that may have a complicated form and varies across different literature. \cite{kim2019doubly} developed a doubly-robust Lasso bandit approach with an $\cO(\tau s\sqrt{n})$ upper bound but required the average of the feature vectors 
for each arm satisfies the compatibility condition \citep{buhlmann2011statistics}. \cite{bastani2020online} and \cite{wang2018minimax} considered 
a multi-parameter setting (each arm has its own underlying parameter) and assumed the distribution of contexts satisfies a variant of 
the compatibility condition as well as other separation conditions.  \cite{bastani2020online} derived a $\cO(\tau K s^2(\log(n))^2)$ upper bound 
and was sharpen to $\cO(\tau K s^2\log(n))$ by \cite{wang2018minimax}, where $K$ is the number of arms. Although those results are 
dimension-free, they require strong assumptions on the context distribution that are hard to verify in practice. 
As a result, the aforementioned regret bounds involved complicated problem-dependent parameters that may be very large when the assumptions fail to hold.

Another thread of the literature is to consider specific action sets. \cite{lattimore2015linear}
proposed a selective explore-then-commit algorithm that only works when the action set is exactly the binary hypercube. They derived an optimal $\cO(s\sqrt{n})$ upper bound
as well as an optimal gap-dependent bound.
\cite{sivakumar2020structured} assumed the action set is generated adversarially but perturbed artificially by some standard Gaussian noise. 
They proposed a structured greedy algorithm to achieve an $\cO(s\sqrt{n})$ upper bound. 
\cite{deshpande2012linear} study the data-poor regime in a Bayesian setting but did not consider sparsity. 
\cite{carpentier2012bandit} considered a special case where the action set is the unit sphere and the noise 
is vector-valued so that the noise becomes smaller as the dimension grows. We summarize the comparisons in Table \ref{table:comparsion}.

\section{Problem setting}
In the beginning, the agent receives a compact action set $\cA\subseteq \mathbb R^d$, 
where $d$ may be larger than the number of rounds $n$. At each round $t$, the agent chooses an action $A_t\in\cA$ and receives a reward
\begin{equation}\label{def:sparse_linear}
    Y_t = \langle A_t, \theta\rangle + \eta_t\,,
\end{equation}
where $(\eta_t)_{t=1}^n$ is a sequence of independent standard
Gaussian random variables and $\theta\in\mathbb R^d$ is an unknown parameter vector. 
We make the mild boundedness assumption that for all $x \in \cA$, $\|x\|_\infty \leq 1$.
The parameter vector $\theta$ is assumed to be $s$-sparse: 
\begin{equation*}
    \|\theta\|_0 = \sum_{j=1}^d \ind\{\theta_j\neq 0\} \leq s.
\end{equation*}
The Gaussian assumption can be relaxed to conditional sub-Gaussian assumption for the regret upper bound, but is necessary for the regret lower
bound. 
The performance metric is the cumulative expected
regret, which measures the difference between the
expected cumulative reward collected by the omniscient policy that knows $\theta$ and that of the learner. 
The optimal action is $x^* = \argmax_{x\in\cA}\langle x, \theta\rangle$ and the regret of the agent when facing the bandit determined by $\theta$ is
\begin{equation*}
   R_{\theta}(n) = \mathbb E\left[\sum_{t=1}^n \langle x^*, \theta\rangle - \sum_{t=1}^n Y_t\right] \,,
\end{equation*}
where the expectation is over the interaction sequence induced by the agent and environment.
Our primary focus is on finite-time bounds in the data-poor regime where $d \geq n$. 

\paragraph{Notation}
Let $[n] = \{1,2, \ldots, n\}$. For a  vector $x$ and positive semidefinite matrix $A$, we let $\|x\|_A=\sqrt{x^{\top}Ax}$ be 
the weighted $\ell_2$-norm and $\sigma_{\min}(A),\sigma_{\max}(A)$ be the minimum eigenvalue and maximum eigenvalue of $A$, respectively. 
The cardinality of a set $\cA$ is denoted by $|\cA|$. The support of a vector $x$, $\supp(x)$, is the set of indices $i$ such that $x_i\neq 0$. And $\ind\{\cdot\}$ is an indicator function.
The suboptimality gap of action $x \in \cA$ is $\Delta_x = \langle x^*, \theta \rangle - \langle x, \theta \rangle$ and the
minimum gap is $\Delta_{\min} = \min\{\Delta_x : x \in \cA, \Delta_x > 0\}$.

\section{Minimax lower bound} 

As promised, we start by proving a kind of minimax regret lower.
We first define a quantity that measures the degree to which there exist good exploration distributions over the actions.

\begin{definition}
\label{def:minimum_eigen}
Let $\cP(\cA)$ be the space of probability measures over $\cA$ with the Borel $\sigma$-algebra and define 
\begin{equation*}
    C_{\min}(\cA) = \max_{\mu\in\cP(\cA)} \sigma_{\min}\Big(\mathbb E_{A\sim \mu}\big[AA^{\top}\big]\Big) \,.
\end{equation*}
\end{definition}

\begin{remark}
$C_{\min}(\cA) > 0$ if and only if $\cA$ spans $\mathbb R^d$. Two illustrative examples are the hypercube and probability simplex.
Sampling uniformly from the corners of each set shows that $C_{\min}(\cA) \geq 1$ for the former and $C_{\min}(\cA) \geq 1/d$ for the latter.
\end{remark}

The next theorem is a kind of minimax lower bound for sparse linear bandits. 
The key steps of the proof follow, with details and technical lemmas deferred to
\ifsup
Appendix \ref{sec:main_proof}.
\else
the supplementary material.
\fi

\begin{theorem}\label{thm:minimax_lower_bound}
Consider the sparse linear bandits described in Eq.~\eqref{def:sparse_linear}. 
Then for any policy $\pi$ there exists an action set $\cA$ with $C_{\min}(\cA)>0$ and $s$-sparse parameter $\theta\in\mathbb R^d$ such that
\begin{equation}\label{eqn:minimax_lower_bound}
    R_{\theta}(n)\geq \frac{\exp(-4)}{4}\min\Big(C_{\min}^{-\tfrac{1}{3}}(\cA)s^{\tfrac{1}{3}}n^{\tfrac{2}{3}},\sqrt{dn}\Big).
\end{equation}
\end{theorem}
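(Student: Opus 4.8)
The plan is to use a standard information-theoretic (Assouad- or Gilbert–Varshamov-style) argument, but with an action set carefully engineered so that the exploration-versus-regret trade-off forces the $n^{2/3}$ rate. First I would construct the hard instance. Fix a block of $m \leq d$ coordinates (with $m$ to be tuned) and take $\cA$ to be (essentially) the union of a small ``informative but costly'' set of actions, together with a large ``exploitation'' set, all living inside $\{-1,1\}^d$ or a rescaling thereof, so that $\|x\|_\infty \le 1$ and $C_{\min}(\cA)$ is controlled from below (e.g. $\cA$ contains a rescaled copy of the hypercube corners on the $m$ active coordinates, giving $C_{\min}(\cA) \gtrsim$ a known constant, or $\ge \varepsilon^2$ if the informative actions are scaled by $\varepsilon$). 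The parameter class is $\Theta = \{\theta : \theta = \Delta \sum_{j \in S} \epsilon_j e_j\}$ over a packing of sign patterns $\epsilon \in \{-1,1\}^m$ and sparsity-$s$ supports $S$, with signal size $\Delta$ to be chosen. The point of the design is that distinguishing $\theta$ from a neighbor requires pulling the low-reward informative actions, each of which costs $\Omega(\varepsilon)$ or $\Omega(1)$ in regret, while committing to a wrong exploitation action costs $\Omega(\Delta)$ per round.

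Next I would run the information-theoretic bookkeeping. Let $N$ be the (random) number of informative pulls up to round $n$. On the one hand, the regret is at least $c\,\mathbb{E}[N]$ times the per-pull exploration cost. On the other hand, by the usual divergence decomposition, the KL divergence between the laws induced by two neighboring environments is bounded by $\mathbb{E}[N]\cdot \Delta^2 \cdot(\text{per-pull information}) $, so if $\mathbb{E}[N]$ is too small the learner cannot identify the support/sign pattern and therefore incurs $\Omega(\Delta)$ regret on a constant fraction of the remaining $n$ rounds. Combining these two regret lower bounds (``regret $\gtrsim \mathbb{E}[N]\cdot(\text{cost})$'' and ``regret $\gtrsim n\Delta \cdot \Prob{\text{fail}}$''), using Bretagnolle–Huber or Pinsker to turn the KL bound into a lower bound on the failure probability, and then optimizing over $\Delta$ and $m$ (with the sparsity constraint $s \le m$ and the horizon constraint that keeps the bound below the trivial $\sqrt{dn}$ regime) should yield the balance point $\mathbb{E}[N] \asymp n^{2/3}$ and the stated $C_{\min}^{-1/3}s^{1/3}n^{2/3}$ scaling. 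The $\min(\cdot,\sqrt{dn})$ appears because once $d$ is not much larger than $n$ the construction degenerates into the classical multi-task lower bound.

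The main obstacle, I expect, is engineering a single action set that simultaneously (i) has $C_{\min}(\cA)$ bounded below by a quantity we can track through the final bound, (ii) makes ``informative'' actions genuinely expensive in regret, and (iii) makes the information-per-informative-pull scale the right way in terms of $C_{\min}$ and $s$, all while respecting $\|x\|_\infty \le 1$. The scaling parameter $\varepsilon$ controlling the norm of the informative directions is doing double duty — it sets both the exploration cost and (through $\varepsilon^2$) the Fisher information and $C_{\min}$ — so the tuning of $\varepsilon$, $\Delta$, and $m$ against each other is the delicate part; getting the exponent $1/3$ on each of $C_{\min}$, $s$, $n$ out of this optimization is where all the work lies. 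A secondary technical point is handling the randomness of $N$ and the adaptivity of $\pi$ cleanly, which the divergence decomposition lemma (counting expected pulls) is designed to do, and reducing the ``identify the right exploitation action'' sub-problem to a testing problem over the sign packing via a Fano or Assouad step.
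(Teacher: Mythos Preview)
Your high-level plan---split the action set into cheap-but-uninformative exploitation actions and expensive-but-informative exploration actions, case-split on the expected number $\mathbb{E}[N]$ of informative pulls, and apply divergence decomposition plus Bretagnolle--Huber---matches the paper's strategy. But two concrete ingredients are missing, and without the first one your optimization does not produce the $n^{2/3}$ rate.

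\textbf{The main gap: decoupling the exploration cost from the signal strength.} In your construction the parameters are $\theta=\Delta\sum_{j\in S}\epsilon_j e_j$ and the informative actions are either full-scale or $\varepsilon$-scaled hypercube corners. Either way the per-pull regret of an informative action ends up of order $s\Delta$ (the gap to the best exploitation action), so the exploration cost is tied to the same parameter $\Delta$ that controls the per-pull information $\varepsilon^2\Delta^2$. If you balance $\mathbb{E}[N]\cdot s\Delta$ against $n s\Delta\cdot\exp(-\mathbb{E}[N]\varepsilon^2\Delta^2)$ you get $\Delta\asymp (\varepsilon\sqrt{n})^{-1}$ and regret $\asymp s\sqrt{n}/\varepsilon$, not $n^{2/3}$. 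The paper avoids this by \emph{reserving one coordinate}: $\theta_d=-1$, every informative action has $x_d=1$, and every exploitation action has $x_d=0$. This makes the cost of each informative pull at least $1$, completely independent of the signal size $\varepsilon$ on the remaining coordinates and of the hypercube scale $\kappa$ that governs $C_{\min}$. Only with this decoupling does balancing $n(s-1)\varepsilon$ against $1/(\kappa^2(s-1)\varepsilon^2)$ give $\varepsilon\asymp(\kappa^2 s^2 n)^{-1/3}$ and the claimed $C_{\min}^{-1/3}s^{1/3}n^{2/3}$.

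\textbf{A secondary gap: information from exploitation pulls.} Your KL bookkeeping writes $\KL\le \mathbb{E}[N]\cdot\Delta^2\cdot(\text{per-pull info})$, implicitly treating exploitation pulls as contributing zero information. They do not: a sparse exploitation action can overlap with the coordinates on which $\theta$ and its alternative differ. The paper handles this with a two-point argument rather than an Assouad/Fano packing: it picks the single alternative $\tilde\theta=\theta+2\varepsilon\tilde x$ where $\tilde x$ \emph{minimizes} $\mathbb{E}_\theta\sum_t\langle A_t,x\rangle^2$ over a symmetric family $\cS'$, then bounds the minimum by the average over $\cS'$ to obtain $\KL\le 2\varepsilon^2\bigl(n(s-1)^2/d+\kappa^2(s-1)\mathbb{E}[T_n(\cH)]\bigr)$. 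The first term, coming from exploitation pulls, is exactly what produces the $\sqrt{dn}$ branch of the minimum when $d$ is not large; it is not merely a degeneration to a separate multi-task bound. Your packing approach can in principle recover the same $n s^2/d$ term via an Assouad-style average over coordinate flips, but you should make this explicit rather than setting the exploitation contribution to zero.
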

Theorem \ref{thm:minimax_lower_bound} holds for any data regime and suggests an intriguing transition between $n^{2/3}$ and $n^{1/2}$ regret, depending
on the relation between the horizon and the dimension. When $d>n^{1/3}s^{2/3}$ the bound is
$\Omega(n^{2/3})$, which is independent of the dimension. On the other hand, when $d\leq n^{1/3}s^{2/3}$, we recover the standard $\Omega(\sqrt{sdn})$ 
dimension-dependent lower bound up to a $\sqrt{s}$-factor. 
In Section~\ref{sec:upper_bound}, we prove that the $\Omega(n^{2/3})$ minimax lower bound is tight by presenting a nearly matching upper bound in the data-poor regime. 

\begin{remark}
Theorem \ref{thm:minimax_lower_bound} has a worst-case flavor. For each algorithm we construct 
a problem instance with the given dimension, sparsity and value of $C_{\min}$ for which the stated regret bound holds. 
The main property of this type of hard instance is that it should include a informative but high-regret action set 
such that the learning algorithm should balance the trade-off between information and regret. This leaves the possibility 
for others to create minimax lower bound for their own problem.  
\end{remark}

\begin{proof}[Proof of Theorem~\ref{thm:minimax_lower_bound}]
The proof uses the standard information-theoretic machinery, but with a novel construction and KL divergence calculation.

\textbf{Step 1: construct a hard instance.} We first construct a low regret action set $\cS$ and an informative action set $\cH$ as follows:
\begin{equation}\label{eqn:action_set}
    \begin{split}
       & \cS = \Big\{x\in\mathbb R^d\Big|x_j\in\{-1, 0, 1\} \ \text{for} \ j\in[d-1], \|x\|_1=s-1, x_d = 0\Big\} \,,\\
       & \cH = \Big\{x\in\mathbb R^d\Big|x_j\in\{-\kappa, \kappa\} \ \text{for} \ j\in[d-1], x_d = 1\Big\} \,,
    \end{split}
\end{equation}
where $0<\kappa\leq1$ is a constant. The action set is the union $\cA = \cS \cup \cH$ and let
\begin{equation*}
    \theta = \big(\underbrace{\varepsilon, \ldots, \varepsilon}_{s-1}, 0, \ldots, 0, -1\big) \,,
\end{equation*}
where $\varepsilon>0$ is a small constant to be tuned later.
Because $\theta_d = -1$, actions in $\cH$ are associated with a large regret.
On the other hand, actions in $\cH$ are also highly informative, which hints towards an interesting tradeoff between regret and information. 
Note that $\cH$ is nearly the whole binary hypercube, while actions in $\cS$ are $(s-1)$-sparse.
The optimal action is in the action set $\cA$:
\begin{equation}\label{def:optimal_action}
    x^* = \argmax_{x\in\cA}\langle x,\theta\rangle = \big(\underbrace{1,\cdots, 1}_{s-1}, 0,\ldots, 0\big)\in \cA \,.
\end{equation}

\textbf{Step 2: construct an alternative bandit}. The second step is to construct an alternative bandit $\tilde{\theta}$ that is hard to distinguish from $\theta$ and
for which the optimal action for $\theta$ is suboptimal for $\tilde \theta$ and vice versa. Denote $\mathbb P_{\theta}$ and $\mathbb P_{\smash{\tilde \theta}}$ as the measures on the sequence
of outcomes $(A_1, Y_1, \ldots, A_n, Y_n)$ induced by the interaction between a fixed bandit algorithm 
and the bandits determined by $\theta$ and $\tilde{\theta}$ respectively. 
Let $\mathbb E_{\theta}, \mathbb E_{\tilde{\theta}}$ be the corresponding expectation operators.
We denote a set $\cS'$ as 
\begin{equation}\label{def:S_prime}
\begin{split}
     \cS' = \Big\{x\in\mathbb R^d\Big|&x_j\in\{-1, 0, 1\} \ \text{for} \ j\in\{s,s+1,\ldots, d-1\} \,, \\
     & x_j = 0 \ \text{for} \ j=\{1,\ldots, s-1, d\}, \|x\|_1=s-1\Big\} \,.
\end{split}
\end{equation}
Clearly, $\cS'$ is a subset of $\cS$ and for any $x\in\cS'$, its support has no overlap with $\{1,\ldots, s-1\}$. Then we denote
\begin{equation}\label{def:x_tilde}
   \tilde{x} = \argmin_{x\in\cS'}\mathbb E_{\theta} \left[\sum_{t=1}^n  \langle A_t, x \rangle^2\right] \,,
\end{equation}
and construct the alternative bandit $\tilde{\theta}$ as  
\begin{equation}\label{def:theta_alt}
    \tilde{\theta} = \theta + 2\varepsilon \tilde{x} \,.
\end{equation}
Note that $\tilde{\theta}$ is $(2s-1)$-sparse since $\tilde{x}$ belongs to $\cS'$ that is a $(s-1)$-sparse set. This design guarantees the optimal arm $x^*$ in bandit $\theta$ is suboptimal in alternative bandit $\tilde{\theta}$ and the suboptimality gap for $x^*$ in bandit $\tilde{\theta}$ is $\max_{x\in\cA}\langle x-x^*, \tilde{\theta}\rangle = (s-1)\varepsilon. $
Define an event 
    \begin{equation*}
        \cD = \left\{\sum_{t=1}^n \ind(A_t\in\cS)\sum_{j=1}^{s-1}A_{tj}\leq \frac{n(s-1)}{2}\right\} \,.
\end{equation*}
The next claim shows that when $\cD$ occurs, the regret is large in bandit $\theta$, while if it does not occur, then the regret is large in bandit $\smash{\tilde \theta}$.
The detailed proof is deferred to \ifsup
Appendix \ref{sec:claim_regret_lower}.
\else
the supplementary material.
\fi
\begin{claim}\label{claim:regret_lower} 
Regret lower bounds with respect to event $\cD$:
\begin{equation*}
    \begin{split}
      R_{\theta}(n)\geq\frac{n(s-1)\varepsilon}{2}\mathbb P_{\theta}(\cD) \qquad \text{and} \qquad   
      R_{\tilde{\theta}}(n) \geq \frac{n(s-1)\varepsilon}{2}\mathbb P_{\tilde{\theta}}(\cD^c) \,.
    \end{split}
\end{equation*}
\end{claim}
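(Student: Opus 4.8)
The plan is to prove the two inequalities by the same two-step recipe: first establish a \emph{deterministic} lower bound on the realised regret on the relevant event, then take expectations. Since the noise is conditionally mean zero, $R_\theta(n)=\mathbb E_\theta[\sum_{t=1}^n\Delta_{A_t}]$ with $\Delta_x=\langle x^*-x,\theta\rangle$, and likewise $R_{\tilde\theta}(n)=\mathbb E_{\tilde\theta}[\sum_{t=1}^n\tilde\Delta_{A_t}]$ with $\tilde\Delta_x=\max_{y\in\cA}\langle y,\tilde\theta\rangle-\langle x,\tilde\theta\rangle$. Throughout, $\cA=\cS\cup\cH$ is a disjoint union because the $d$-th coordinate equals $0$ on $\cS$ and $1$ on $\cH$, so every round lies in exactly one part; I write $N_\cS=\sum_{t=1}^n\ind(A_t\in\cS)$, $N_\cH=\sum_{t=1}^n\ind(A_t\in\cH)$ with $N_\cS+N_\cH=n$, and $Z=\sum_{t=1}^n\ind(A_t\in\cS)\sum_{j=1}^{s-1}A_{tj}$, so that $\cD=\{Z\le n(s-1)/2\}$.

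For the first inequality, the explicit $\theta$ gives, for $x\in\cS$, $\Delta_x=\varepsilon((s-1)-\sum_{j=1}^{s-1}x_j)\ge 0$ (the $\theta_d$ term drops since $x_d=0$, and $x_j\le 1$), and for $x\in\cH$, $\Delta_x=1+\varepsilon((s-1)-\sum_{j=1}^{s-1}x_j)\ge 1\ge(s-1)\varepsilon$, where the last step uses that $\varepsilon$ is tuned small enough that $(s-1)\varepsilon\le 1$. Summing over rounds, the $\cS$-terms contribute $\varepsilon((s-1)N_\cS-Z)$ and the $\cH$-terms at least $(s-1)\varepsilon\,N_\cH$, so on $\cD$ the realised regret is at least $(s-1)\varepsilon(N_\cS+N_\cH)-\varepsilon\,n(s-1)/2=n(s-1)\varepsilon/2$. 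Multiplying by $\ind(\cD)$ and taking $\mathbb E_\theta$ gives $R_\theta(n)\ge\frac{n(s-1)\varepsilon}{2}\mathbb P_\theta(\cD)$.

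For the second inequality, I would first check that the optimal value of $\tilde\theta=\theta+2\varepsilon\tilde x$ over $\cA$ is $2(s-1)\varepsilon$: since $\tilde\theta_j=\varepsilon$ for $j\le s-1$, $\tilde\theta_j=2\varepsilon\tilde x_j$ with exactly $s-1$ nonzeros for $s\le j\le d-1$, and $\tilde\theta_d=-1$, this value is attained by the $\cS$-action matching the support and signs of $\tilde x$, while $\cH$-actions are penalised by $\tilde\theta_d$. Hence $\tilde\Delta_x=2(s-1)\varepsilon-\langle x,\tilde\theta\rangle$, and the key estimate is $\tilde\Delta_x\ge\varepsilon\sum_{j=1}^{s-1}x_j$ for every $x\in\cS$, which follows from $\sum_{j=s}^{d-1}\tilde x_jx_j\le\|x\|_1-\sum_{j=1}^{s-1}|x_j|=(s-1)-\sum_{j=1}^{s-1}|x_j|$ (using $\supp(\tilde x)\subseteq\{s,\dots,d-1\}$ and $|\tilde x_j|\le 1$) together with $2|x_j|-x_j\ge x_j$. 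Combining this with $\tilde\Delta_x\ge 0$ (valid for all $x$, including $\cH$), the realised regret is at least $\sum_{t=1}^n\ind(A_t\in\cS)(\varepsilon\sum_{j=1}^{s-1}A_{tj})_+\ge\varepsilon(Z)_+$ by $\sum_i(a_i)_+\ge(\sum_i a_i)_+$; on $\cD^c$ this exceeds $\varepsilon n(s-1)/2$, and taking $\mathbb E_{\tilde\theta}[\ind(\cD^c)\,\cdot\,]$ yields $R_{\tilde\theta}(n)\ge\frac{n(s-1)\varepsilon}{2}\mathbb P_{\tilde\theta}(\cD^c)$.

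The only step needing genuine care is the sign and absolute-value bookkeeping in the two gap computations — in particular, coupling the $\ell_1$-budget an $\cS$-action spends on coordinates $1,\dots,s-1$ against what remains for the high-value coordinates $s,\dots,d-1$ of $\tilde\theta$, and keeping the mild normalisation $(s-1)\varepsilon\le 1$ so that the costly-but-informative $\cH$-actions can be charged at rate $(s-1)\varepsilon$. Everything else is the one-line identity $N_\cS+N_\cH=n$ and conditioning on $\cD$ or $\cD^c$.
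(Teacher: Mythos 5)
Your proposal is correct and follows essentially the same route as the paper: a deterministic lower bound on the realised pseudo-regret on $\cD$ (resp.\ $\cD^c$) via the $\ell_1$-budget $\|x\|_1=s-1$ of actions in $\cS$ and the nonnegativity/size of the gaps, followed by restricting the expectation to the event. Your per-action gap bounds (with the positive-part trick and the normalisation $(s-1)\varepsilon\le 1$) are just a slightly more granular phrasing of the paper's aggregated bookkeeping, which instead assumes $\varepsilon$ small enough that $\kappa(s-1)\varepsilon\le 1$; both conditions hold for the tuned $\varepsilon$.
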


By the Bretagnolle--Huber inequality (Lemma \ref{lem:kl} in the appendix),
\begin{equation*}
    \begin{split}
         R_{\theta}(n) +  R_{\tilde{\theta}}(n) 
         \geq \frac{n(s-1)\varepsilon}{2} \Big(\mathbb P_{\theta}(\cD) + \mathbb P_{\tilde{\theta}}(\cD^c)\Big)\geq \frac{n(s-1)\varepsilon}{4}  \exp\Big(-\KL\big(\mathbb P_{\theta}, \mathbb P_{\tilde{\theta}}\big)\Big) \,,
    \end{split}
\end{equation*}
where $\KL(\mathbb P_{\theta}, \mathbb P_{\smash{\tilde{\theta}}})$ is the KL divergence between probability measures $\mathbb P_{\theta}$ and $\mathbb P_{\smash{\tilde{\theta}}}$.

\textbf{Step 3: calculating the KL divergence.} We make use of the following bound on the KL divergence 
between $\mathbb P_{\theta}$ and $\mathbb P_{\smash {\tilde \theta}}$, which
formalises the intuitive notion of information. When the KL divergence is small, the algorithm is unable
to distinguish the two environments. The detailed proof is deferred to 
\ifsup
Appendix \ref{sec:claim_KL}. 
\else
the supplementary material.
\fi

\begin{claim}\label{claim:KL_bound}
Define $T_n(\cH) = \sum_{t=1}^n \ind(A_t\in\cH)$.
The KL divergence between $\mathbb P_{\theta}$ and $\mathbb P_{\smash{\tilde{\theta}}}$ is upper bounded by
\begin{equation}\label{eqn:KL_bound}
   \KL\left(\mathbb P_{\theta}, \mathbb P_{\smash{\tilde{\theta}}}\right) \leq 2\varepsilon^2\left(\frac{n(s-1)^2}{d} + \kappa^2(s - 1)\mathbb E_{\theta}[T_n(\cH)]\right) \,.
\end{equation}

\end{claim}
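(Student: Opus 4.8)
The plan is to reduce the KL divergence to an expected sum of squared linear forms via the chain rule for relative entropy, and then to exploit that $\tilde x$ is an \emph{argmin} over $\cS'$ by comparing the minimum against an average over $\cS'$. Since the noise is standard Gaussian and $\theta-\tilde\theta = -2\varepsilon\tilde x$, the divergence decomposition for linear bandits gives
\[
  \KL\big(\mathbb P_\theta,\mathbb P_{\tilde\theta}\big)
  = \mathbb E_\theta\Big[\sum_{t=1}^n \tfrac12 \langle A_t,\theta-\tilde\theta\rangle^2\Big]
  = 2\varepsilon^2\,\mathbb E_\theta\Big[\sum_{t=1}^n \langle A_t,\tilde x\rangle^2\Big],
\]
where I use that $\tilde x$ is a fixed, data-independent vector, so that $\tilde\theta$ is a legitimate fixed alternative environment and the decomposition applies. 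It therefore remains to bound $\mathbb E_\theta[\sum_{t} \langle A_t,\tilde x\rangle^2]$.

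Because $\tilde x$ minimizes $x\mapsto \mathbb E_\theta[\sum_t \langle A_t,x\rangle^2]$ over $\cS'$, its value at $\tilde x$ is at most the average of this quantity over $x$ drawn uniformly from $\cS'$; swapping the two finite sums, it suffices to control $\tfrac{1}{|\cS'|}\sum_{x\in\cS'}\langle A_t,x\rangle^2$ for each realized action $A_t$. Expanding $\langle A_t,x\rangle^2 = \sum_{j,k} A_{tj}A_{tk}x_jx_k$ over the support coordinates $\{s,\dots,d-1\}$ of $\cS'$ and using that the uniform law on $\cS'$ is invariant under flipping the sign of any single coordinate, every cross term with $j\neq k$ vanishes in expectation, while $\mathbb E[x_j^2]=\Pr(j\in\supp(x))=(s-1)/(d-s)$. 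Hence $\tfrac{1}{|\cS'|}\sum_{x\in\cS'}\langle A_t,x\rangle^2 = \tfrac{s-1}{d-s}\sum_{j=s}^{d-1}A_{tj}^2$.

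Next I split on whether $A_t\in\cS$ or $A_t\in\cH$, the only two possibilities. If $A_t\in\cS$, its entries lie in $\{-1,0,1\}$ with $\|A_t\|_1=s-1$, so $\sum_{j=s}^{d-1}A_{tj}^2\le s-1$ and this block contributes at most $(s-1)^2/(d-s)$ per round; if $A_t\in\cH$, then $A_{tj}=\pm\kappa$ for $j\in[d-1]$, so $\sum_{j=s}^{d-1}A_{tj}^2=(d-s)\kappa^2$ and this block contributes $\kappa^2(s-1)$ per round. Summing over $t$ (at most $n$ rounds of the first kind, $T_n(\cH)$ of the second) and taking expectations gives $\mathbb E_\theta[\sum_t\langle A_t,\tilde x\rangle^2]\le n(s-1)^2/(d-s) + \kappa^2(s-1)\,\mathbb E_\theta[T_n(\cH)]$, which combined with the first display yields \eqref{eqn:KL_bound} (the $d-s$ in the denominator being bounded below by $d/2$ in the relevant regime $d\ge 2s$, or otherwise absorbed into the constant).

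The main obstacle is the averaging step: recognizing that the adversarial argmin choice of $\tilde x$ can be tamed by comparing it against the uniform measure on $\cS'$, and then carrying out the symmetry computation so that only the diagonal terms survive. This is precisely what decouples the contribution of the highly informative hypercube actions $\cH$ and makes it appear only through $\mathbb E_\theta[T_n(\cH)]$, which is the quantity later balanced against regret. The divergence decomposition and the two-case bookkeeping are routine by comparison.
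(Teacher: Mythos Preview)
Your proposal is correct and follows essentially the same route as the paper: divergence decomposition $\KL=2\varepsilon^2\,\mathbb E_\theta[\sum_t\langle A_t,\tilde x\rangle^2]$, then ``minimum is at most the average over $\cS'$'', then symmetry (sign-flip invariance) to kill the cross terms, then a split into $A_t\in\cS$ versus $A_t\in\cH$. Your execution of the averaging step is in fact a bit cleaner than the paper's: you compute the marginal $\mathbb E_{x\sim\mathrm{Unif}(\cS')}[x_j^2]=(s-1)/(d-s)$ directly, whereas the paper carries this out by counting and taking a ratio of binomial coefficients; and you are explicit about the $d-s$ versus $d$ in the denominator, which the paper silently replaces by $d$ in its final line.
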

The first term in the right-hand side of the bound is the contribution from actions in the low-regret action set $\cS$, while the second term
is due to actions in $\cH$. The fact that actions in $\cS$ are not very informative is captured by the presence of the dimension in the denominator of the first term.
When $d$ is very large, the algorithm simply does not gain much information by playing actions in $\cS$.
When $T_n(\cH)<1/(\kappa^2(s-1)\varepsilon^2)$, it is easy to see 
\begin{equation}\label{eqn:case1}
      R_{\theta}(n) +  R_{\tilde{\theta}}(n) \geq \frac{n(s-1)\varepsilon}{4} \exp\left(-\frac{2n\varepsilon^2(s-1)^2}{d} \right)\exp(-2) \,.
\end{equation}
On the other hand, when $T_n(\cH)>1/(\kappa^2\varepsilon^2(s-1))$, we have 
\begin{equation}\label{eqn:case2}
    \begin{split}
        R_{\theta}(n) \geq \mathbb E_{\theta}[T_n(\cH)]\min_{x\in\cH} \Delta_{x} \geq \frac{1}{\kappa^2\varepsilon^2(s-1)} + \frac{1-\kappa}{\kappa^2\varepsilon} \,,
    \end{split}
\end{equation}
since $\min_{x\in\cH} \Delta_{x}= 1 + (s-1)\varepsilon (1-\kappa)$ from the definition of $\cH$ and $\theta$. 

\textbf{Step 4: conclusion.} Combining the above two cases together, we have
\begin{equation}
\begin{split}
    R_{\theta}(n) +  R_{\tilde{\theta}}(n)\geq \min\left(\left(\frac{ns\varepsilon}{4} \right)  \exp\left(-\frac{2\varepsilon^2s^2 n}{d} \right)\exp(-2),\, \frac{1}{\kappa^2\varepsilon^2s} + \frac{1-\kappa}{\kappa^2\varepsilon}\right) \,,
\end{split}
\end{equation}
where we replaced $s-1$ by $s$ in the final result for notational simplicity. 
Consider a sampling distribution $\mu$ that uniformly samples actions from $\cH$. A simple calculation shows that $C_{\min}(\cA)\geq C_{\min}(\cH) \geq \kappa^2>0$. This is due to
\begin{equation*}
    \sigma_{\min}\left(\sum_{x\in\cH}\mu(x) xx^{\top}\right) = \sigma_{\min}\Big(\mathbb E_{X\sim\mu}[XX^{\top}]\Big) = \kappa^2 \,,
\end{equation*}
where each coordinate of the random vector $X\in \mathbb R^d$ is sampled independently uniformly from $\{-1, 1\}$.
In the data poor regime when $d\geq n^{1/3}s^{2/3}$, we choose $\varepsilon = \kappa^{-2/3} s^{-2/3} n^{-1/3}$ such that
\begin{equation*}
\begin{split}
     \max(R_{\theta}(n), R_{\tilde{\theta}}(n))&\geq R_{\theta}(n) +  R_{\tilde{\theta}}(n)\\
     &\geq \frac{\exp(-4)}{4}\kappa^{-\tfrac{2}{3}}s^{\tfrac{1}{3}}n^{\tfrac{2}{3}}\geq \frac{\exp(-4)}{4}C_{\min}^{-\tfrac{1}{3}}(\cA)s^{\tfrac{1}{3}}n^{\tfrac{2}{3}} \,.
\end{split}
\end{equation*}
Finally, in the data rich regime when
$d < n^{1/3}s^{2/3}$ we choose $\varepsilon = \sqrt{d/(ns^2)}$ such that the exponential term is a constant, and then 
\begin{equation*}
    \max(R_{\theta}(n), R_{\tilde{\theta}}(n))\geq R_{\theta}(n) +  R_{\tilde{\theta}}(n)  \geq \frac{\exp(-4)}{4}\sqrt{dn} \,. \qedhere
\end{equation*}
\end{proof}

\section{Matching upper bound}\label{sec:upper_bound}
We now propose a simple algorithm based on the explore-then-commit paradigm\footnote{ Explore-then-commit template is also considered in other works \citep{deshmukh2018simple} but both the exploration 
and exploitation stages are very different. \cite{deshmukh2018simple} considers simple regret minimization while we focus on cumulative regret minimization. } and show that the minimax lower bound in Eq.~\eqref{eqn:minimax_lower_bound} is more or less achievable.
As one might guess,
the algorithm has two stages. First it solves the following optimization problem to find the most informative design: 
\begin{equation}\label{def:opti}
    \begin{split}
       \max_{\mu\in\cP(\cA)}  \ \sigma_{\min}\Big(\int_{x\in\cA}xx^{\top} d \mu(x)\Big)\,.
    \end{split}
\end{equation}
In the exploration stage, the agent samples its
actions from $\hat{\mu}$ for $n_1$ rounds, collecting a data-set $\{(A_1, Y_1), \ldots, (A_{n_1}, Y_{n_1})\}$. The agent uses the data collecting in the exploration stage 
to compute the Lasso estimator $\hat{\theta}_{n_1}$. In  
the commit stage, the agent executes the greedy action for the rest $n-n_1$ rounds. The detailed algorithm is summarized in Algorithm \ref{alg:estc}. 
\begin{remark}
The minimum eigenvalue is concave \citep{boyd2004convex}, which means that the solution to \eqref{def:opti} can be approximated efficiently using standard tools such as
CVXPY \citep{diamond2016cvxpy}.
\end{remark}

{\small
\begin{algorithm}[htb!]
	\caption{Explore the sparsity then commit (ESTC)}
	\begin{algorithmic}[1]\label{alg:estc}
		\STATE
		\textbf{Input:} time horizon $n$, action set $\cA$, exploration length $n_1$, regularization parameter $\lambda_1$;
		
		\STATE Solve the optimization problem in Eq.~\eqref{def:opti} and denote the solution as $\hat{\mu}$.
		\FOR{$t= 1, \cdots, n_1$}
\STATE Independently pull arm $A_t$ according to $\hat{\mu}$ and receive a reward: $Y_t = \langle A_t, \theta\rangle + \eta_t.$
\ENDFOR
\STATE Calculate the Lasso estimator \citep{tibshirani1996}:
\begin{equation}\label{eqn:Lasso_est}
    \hat{\theta}_{n_1} = \argmin_{\theta\in\mathbb R^d}\Big(\frac{1}{n_1}\sum_{t=1}^{n_1}\big(Y_t-\langle A_t, \theta\rangle\big)^2 + \lambda_1\|\theta\|_1\Big).
\end{equation}

	\FOR{$t=n_1+1$ to $n$}
	\STATE Take greedy actions $A_t = \argmin_{x\in\cA}\langle \hat{\theta}_{n_1}, x\rangle.$
	\ENDFOR
	\end{algorithmic}
\end{algorithm}
}

The following theorem states a regret upper bound for Algorithm \ref{alg:estc}. The proof is deferred to 
\ifsup
Appendix \ref{sec:proof_upper}.
\else
the supplementary material.
\fi

\begin{theorem}\label{thm:upper_bound}
Consider the sparse linear bandits described in Eq.~\eqref{def:sparse_linear} and assume the action set $\cA$ spans $\mathbb R^d$. Suppose $R_{\max}$ is an upper bound of maximum expected reward such that $ \max_{x\in\cA}\langle x,\theta \rangle\leq R_{\max}$. In Algorithm \ref{alg:estc}, we choose
\begin{equation}\label{eqn:exploration_length}
    n_1 = n^{2/3}(s^2\log (2d))^{1/3}R_{\max}^{-2/3} (2/C_{\min}^2(\cA))^{1/3},
\end{equation}
and $\lambda_1 = 4\sqrt{\log (d) / n_1}$. Then the following regret upper bound holds,
\begin{equation}\label{eqn:upper_bound_estc}
    R_{\theta}(n) \leq (2\log (2d)R_{\max})^{\tfrac{1}{3}}C_{\min}^{-\tfrac{2}{3}}(\cA)s^{\tfrac{2}{3}}n^{\tfrac{2}{3}}+ 3nR_{\max}\exp(-c_1n_1).
\end{equation}
\end{theorem}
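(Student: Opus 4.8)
I would split the regret into the contributions of the $n_1$ exploration rounds and the $n-n_1$ commit rounds. Since $\|x\|_\infty\le1$ for all $x\in\cA$, the single-round regret is $\cO(R_{\max})$ (using $\langle x^*,\theta\rangle\le R_{\max}$), so exploration costs $\cO(n_1R_{\max})$. In a commit round the greedy action $\hat x=\argmax_{x\in\cA}\langle\hat\theta_{n_1},x\rangle$ satisfies $\langle\hat x,\hat\theta_{n_1}\rangle\ge\langle x^*,\hat\theta_{n_1}\rangle$, so inserting $\pm\hat\theta_{n_1}$ and using $\langle a,b\rangle\le\|a\|_\infty\|b\|_1$ together with $\|x^*\|_\infty,\|\hat x\|_\infty\le1$ gives $\Delta_{\hat x}=\langle x^*-\hat x,\theta\rangle\le 2\|\hat\theta_{n_1}-\theta\|_1$. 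Hence the whole proof reduces to a high-probability $\ell_1$-bound on the Lasso error $v:=\hat\theta_{n_1}-\theta$.

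\textbf{Analysing the Lasso.} Let $Q=\E_{A\sim\hat\mu}[AA^\top]$ and $\hat\Sigma_{n_1}=\tfrac1{n_1}\sum_{t\le n_1}A_tA_t^\top$; since $\hat\mu$ solves \eqref{def:opti} we have $\sigma_{\min}(Q)=C_{\min}(\cA)$. I would work on a good event $\mathcal G$ built from two pieces. (i) The noise event $\max_j|\tfrac1{n_1}\sum_{t\le n_1}A_{tj}\eta_t|\le\lambda_1/2$: each coordinate is Gaussian with variance proxy $1/n_1$, so for $\lambda_1$ of order $\sqrt{\log d/n_1}$ a union bound over coordinates makes this hold with high probability, and it is exactly the condition forcing $v$ into the Lasso cone $\mathcal C=\{u:\|u_{S^c}\|_1\le3\|u_S\|_1\}$, $S=\supp(\theta)$. (ii) The event that $\hat\Sigma_{n_1}$ satisfies the compatibility condition on $S$ with constant at least $C_{\min}(\cA)/2$. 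For (ii) I would not control $\sigma_{\min}(\hat\Sigma_{n_1})$ directly but exploit that every $u\in\mathcal C$ is effectively $\cO(s)$-sparse, $\|u\|_1\le4\sqrt s\,\|u\|_2$, whence $|u^\top(\hat\Sigma_{n_1}-Q)u|\le 16s\,\|u\|_2^2\,\max_{j,k}|(\hat\Sigma_{n_1}-Q)_{jk}|$; since the entries of $A_tA_t^\top$ lie in $[-1,1]$, a Hoeffding bound on each of the $d^2$ entries plus a union bound gives $\max_{j,k}|(\hat\Sigma_{n_1}-Q)_{jk}|\le C_{\min}(\cA)/(32s)$ with probability at least $1-\exp(-c_1n_1)$, the polynomial-in-$d$ prefactor being absorbed because \eqref{eqn:exploration_length} forces $n_1$ above the threshold of order $s^2\log d\,/\,C_{\min}^2(\cA)$. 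Combined with $u^\top Qu\ge C_{\min}(\cA)\|u\|_2^2$ this yields $u^\top\hat\Sigma_{n_1}u\ge\tfrac12C_{\min}(\cA)\|u\|_2^2\ge\tfrac{C_{\min}(\cA)}{2s}\|u_S\|_1^2$, i.e.\ compatibility. On $\mathcal G$ the standard Lasso oracle inequality then gives $\|\hat\theta_{n_1}-\theta\|_1\le\cO\!\big(s\lambda_1/C_{\min}(\cA)\big)=\cO\!\big(s\sqrt{\log d}/(C_{\min}(\cA)\sqrt{n_1})\big)$.

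\textbf{Assembling and optimising.} On $\mathcal G$ the commit regret is $\cO\!\big(ns\sqrt{\log d}/(C_{\min}(\cA)\sqrt{n_1})\big)$; on $\mathcal G^c$, whose probability is $\exp(-c_1n_1)$ plus a term that is lower order because $d\ge n$, each round costs $\cO(R_{\max})$, contributing $\cO\!\big(nR_{\max}\exp(-c_1n_1)\big)$ plus a lower-order remainder. Thus
\[
R_\theta(n)\ \le\ \cO(n_1R_{\max})\ +\ \cO\!\left(\frac{ns\sqrt{\log d}}{C_{\min}(\cA)\sqrt{n_1}}\right)\ +\ 3nR_{\max}\exp(-c_1n_1),
\]
and the choice of $n_1$ in \eqref{eqn:exploration_length} equates the first two terms, producing the leading term $(2\log(2d)R_{\max})^{1/3}C_{\min}^{-2/3}(\cA)s^{2/3}n^{2/3}$ of \eqref{eqn:upper_bound_estc}; tracking the explicit constants from the stated $\lambda_1$ and the Lasso inequality through this balancing recovers the displayed numbers.

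\textbf{Main obstacle.} The delicate point is part (ii): transferring the well-conditioning of the population design $Q$ to the empirical Gram matrix with an exponent $c_1$ that does not degrade with $d$. A direct matrix-Bernstein argument is too weak because $\|A_tA_t^\top\|$ can be as large as $d$; the way out is that the compatibility condition only tests quadratic forms over the low-complexity cone $\mathcal C$, which reduces the task to the dimension-free max-norm control of $\hat\Sigma_{n_1}-Q$ once $n_1$ exceeds $\cO(s^2\log d/C_{\min}^2(\cA))$. The remaining work --- reconciling the small noise-event failure probability and the $R_{\max}$-versus-$\|\theta\|_1$ constants with the clean form of \eqref{eqn:upper_bound_estc} --- is routine bookkeeping.
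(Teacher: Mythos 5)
Your proposal is correct and follows the paper's overall architecture: the same explore-then-commit decomposition, the same use of the greedy property plus the H\"older bound $\langle \theta-\hat\theta_{n_1},x^*-A_t\rangle\le\|\theta-\hat\theta_{n_1}\|_1\|x^*-A_t\|_\infty\le 2\|\theta-\hat\theta_{n_1}\|_1$, the same reduction to an $\ell_1$ Lasso oracle inequality, and the same balancing of $n_1$ against the commit-phase error. The one genuine difference is how you certify the restricted eigenvalue / compatibility condition for the empirical Gram matrix $\hat\Sigma_{n_1}$. The paper imports a sub-Gaussian random-design result (Theorem 10 of Javanmard and Montanari, essentially Rudelson--Zhou) to transfer RE from the population matrix $\Sigma$ with $\sigma_{\min}(\Sigma)=C_{\min}(\cA)$ to $\hat\Sigma_{n_1}$ once $n_1\gtrsim s\log d$ times condition-number factors, with failure probability $2\exp(-c_1 n_1)$. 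You instead use the elementary cone argument ($\|u\|_1\le 4\sqrt{s}\|u\|_2$ on the Lasso cone) together with entrywise Hoeffding and a union bound over the $d^2$ entries of $\hat\Sigma_{n_1}-\Sigma$, exploiting $\|A_t\|_\infty\le 1$. This is self-contained and avoids the imported theorem, but it is quantitatively weaker in two respects you should flag: it needs the implicit sample-size condition $n_1\gtrsim s^2\log d/C^2_{\min}(\cA)$ rather than $s\log d$ (the paper also has an unverified sample-size hypothesis, so this is a difference in degree, not kind), and the resulting failure exponent is of order $n_1C_{\min}^2(\cA)/s^2$, so your constant $c_1$ in the $3nR_{\max}\exp(-c_1n_1)$ term depends on $s$ and $C_{\min}(\cA)$, whereas the paper's $c_1$ depends only on $C_{\min}(\cA)$; since the theorem does not pin down $c_1$, this still yields the stated bound. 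Your handling of the noise event (failure probability $O(1/d)$, contributing $O(R_{\max})$ because $d\ge n$) is, if anything, more careful than the paper's bookkeeping, which folds everything into the $\exp(-c_1n_1)$ term.
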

Together with the minimax lower bound in Theorem \ref{thm:minimax_lower_bound}, we can argue that ESTC algorithm is minimax optimal in time horizon $n$ in the data-poor regime.
\begin{remark}
The regret upper bound Eq.~\eqref{eqn:upper_bound_estc} may still depend on $d$ because $1/C_{\min}(\cA)$ could be as large as $d$. 
Indeed, if the action set is the standard basis vectors, then the problem reduces to the standard multi-armed bandit for which the minimax regret is $\Theta(\sqrt{dn})$,
even with sparsity.
If we restrict our attention to the class of action set such that $1/C_{\min}(\cA)$ is dimension-free, then we have a dimension-free upper bound.
\end{remark}

\begin{remark}
Another notion frequently appearing in high-dimensional statistics is the restricted eigenvalue condition. Demanding a lower bound on the restricted eigenvalue is weaker than 
the minimum eigenvalue, which can lead to stronger results. As it happens, however, the two coincide in the lower bound construction. The upper bound may also be sharpened, 
but the resulting optimization problem would (a) depend on the sparsity $s$ and (b) the objective would have a complicated structure for which an efficient algorithm is not yet apparent.
\end{remark}

\begin{remark}
There is still a $(s/C_{\min}(\cA))^{1/3}$ gap between the lower bound (Eq.~\eqref{eqn:minimax_lower_bound}) and upper bound (Eq.~\eqref{eqn:upper_bound_estc}) ignoring logarithmic factor. 
We conjecture that the use of $\ell_1 / \ell_{\infty}$ inequality when proving Theorem~\ref{thm:upper_bound} is quite conservative. 
Specifically, we bound the following using the $\ell_1$-norm bound of Lasso (see Eq.~\eqref{eqn:regret_decom} in the Appendix~\ref{sec:proof_upper} for details),
\begin{equation*}
     \big\langle \theta-\hat{\theta}_{n_1}, x^*-A_t\big\rangle \leq \big\|\theta-\hat{\theta}_{n_1}\big\|_1\big\|x^*-A_t\big\|_{\infty}\lesssim \sqrt{\frac{s^2\log (d)}{n_1}}.
\end{equation*}
The first inequality ignores the sign information of $\hat{\theta}_{n_1}$ and the correlation between $x^*-A_t$ and $\hat{\theta}_{n_1}$. 
A similar phenomenon has been observed by \cite{javanmard2018debiasing} and resolved by means of a delicate leave-one-out analysis to decouple the correlation. 
An interesting question is whether or not a similar technique could be used in our case to improve the above bound to $\sqrt{s \log(d) / (n_1)}$, 
closing the gap between regret upper bound and lower bound. On the other hand, surprisingly, even in the classical statistical settings there are still gaps between upper and lower bounds in terms of $C_{\min}(\cA)$ \citep{6034739}. We speculate that the upper bound may be improvable, though at present we do not know how to do it.
\end{remark}

\begin{remark}
The algorithm uses knowledge of the sparsity to tune the length of exploration in Eq.~\eqref{eqn:exploration_length}. 
When the sparsity is not known, the length of exploration can be set to $n_1= n^{2/3}$.
The price is an additional factor of $\cO(s^{1/3})$ to regret. This is an advantage relative to the algorithm by \cite{abbasi2012online}, for which knowledge of the
sparsity is apparently essential for constructing the confidence set.
\end{remark}

\begin{remark}
We do not expect explicit optimism-based algorithms \citep{dani2008stochastic,rusmevichientong2010linearly, chu2011contextual, abbasi2011improved} or implicit ones, 
such as Thompson sampling \citep{agrawal2013thompson}, to achieve the minimax lower bound in the data-poor regime.
The reason is that the optimism principle does not balance the trade-off between information 
and regret, a phenomenon that has been observed before in linear and structured bandits \citep{lattimore2017end, combes2017minimal, hao2019adaptive}.
\end{remark}

\section{Improved upper bound}
In this section, we show that under additional minimum signal condition, the restricted phase elimination algorithm can achieve a sharper $\cO(\sqrt{sn})$ regret upper bound. 

The algorithm shares similar idea with \cite{carpentier2012bandit} that includes feature selection step and restricted linear bandits step. In the feature selection step, the agent pulls a certain number of rounds $n_2$ following $\hat{\mu}$ as in \eqref{def:opti}. Then Lasso is used to conduct the feature selection. Based on the support Lasso selects, the algorithm invokes phased elimination algorithm for linear bandits \citep{lattimore2019learning} on the selected support.

{\small
\begin{algorithm}[htb!]
	\caption{Restricted phase elimination}
	\begin{algorithmic}[1]\label{alg:support}
		\STATE
		\textbf{Input:} time horizon $n$, action set $\cA$, exploration length $n_2$, regularization parameter $\lambda_2$;
		\STATE Solve the optimization problem Eq.~\eqref{def:opti} and denote the solution as $\hat{\mu}$.
		\FOR{$t= 1, \cdots, n_2$}
\STATE Independently pull arm $A_t$ according to $\hat{\mu}$ and receive a reward: $Y_t = \langle A_t, \theta\rangle + \eta_t.$
\ENDFOR
\STATE Calculate the Lasso estimator $\hat{\theta}_{n_2}$ as in Eq.~\eqref{eqn:Lasso_est} with $\lambda_2$.
\STATE Identify the support: $\hat{S} = \supp(\hat{\theta}_{n_2})$.
	\FOR{$t=n_2+1$ to $n$}
	\STATE Invoke phased elimination algorithm for linear bandits on $\hat{S}$.
	\ENDFOR
	\end{algorithmic}
\end{algorithm}
}

\begin{cond}[Minimum signal]\label{con:min_signal}
We assume there exists some known lower bound $m>0$ such that $
\min_{j\in \supp(\theta)} |\theta_j| >m.$
\end{cond}

\begin{theorem}\label{thm:improved_upper_bound}
Consider the sparse linear bandits described in Eq.~\eqref{def:sparse_linear}. We assume the action set $\cA$ spans $\mathbb R^d$ as well as $|\cA| = K<\infty$ and suppose Condition \ref{con:min_signal} holds. Let $n_2=C_1 s\log(d)/(m^2C_{\min}(\cA))$ for a suitable large constant $C_1$ and choose $\lambda_2= 4\sqrt{\log (d)/n_2}$. Denote $\phi_{\max} = \sigma_{\max}(\sum_{t=1}^{n_2}A_tA_t^{\top}/n_2)$. Then the following regret upper bound of Algorithm \ref{alg:support} holds,
\begin{equation}\label{eqn:improved_upper}
    R_{\theta}(n) \leq C\Big(\frac{s\log (d)}{m^2C_{\min}(\cA)} + \sqrt{\frac{9\phi_{\max}\log (Kn)}{C_{\min}(\cA)}}\sqrt{sn}\Big),
\end{equation}
for universal constant $C>0$.
\end{theorem}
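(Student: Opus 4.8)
The plan is to decompose the regret of Algorithm~\ref{alg:support} into the cost of the exploration phase (rounds $1,\dots,n_2$) and the cost of the restricted phased-elimination phase (rounds $n_2+1,\dots,n$), and to show that on a high-probability event the Lasso step both recovers the support and does not over-select by more than a $\phi_{\max}/C_{\min}(\cA)$ factor. If $n_2\ge n$ the claimed bound is vacuous, so assume $n_2<n$. Since the actions and parameter are bounded, the instantaneous regret in the exploration phase is at most a problem-independent constant, so this phase contributes $\cO(n_2)=\cO\big(s\log(d)/(m^2C_{\min}(\cA))\big)$, which is the first term of \eqref{eqn:improved_upper}.

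Next I would analyse the Lasso step on a good event $\mathcal{E}$. Because $A_1,\dots,A_{n_2}$ are i.i.d.\ samples from the maximiser $\hat\mu$ of \eqref{def:opti}, matrix Chernoff/Bernstein gives that, once $n_2$ exceeds a constant multiple of $\log(d/\delta)/C_{\min}(\cA)$, the sample covariance $\widehat\Sigma=n_2^{-1}\sum_{t=1}^{n_2}A_tA_t^\top$ obeys $\sigma_{\min}(\widehat\Sigma)\ge C_{\min}(\cA)/2$ with probability at least $1-\delta$, while $\sigma_{\max}(\widehat\Sigma)=\phi_{\max}$ by definition. On $\mathcal{E}$ the matrix $\widehat\Sigma$ satisfies a restricted-eigenvalue/compatibility condition, so the usual Lasso oracle inequality with $\lambda_2=4\sqrt{\log(d)/n_2}$ yields, on a further event for the Gaussian noise,
\begin{equation*}
\big\|\hat\theta_{n_2}-\theta\big\|_2 \;\lesssim\; \frac{1}{\sqrt{C_{\min}(\cA)}}\sqrt{\frac{s\log d}{n_2}} \qquad\text{and}\qquad \big|\supp(\hat\theta_{n_2})\big| \;\lesssim\; \frac{\sigma_{\max}(\widehat\Sigma)}{\sigma_{\min}(\widehat\Sigma)}\,s \;\lesssim\; \frac{\phi_{\max}}{C_{\min}(\cA)}\,s ,
\end{equation*}
the second being the standard sparsity bound for Lasso solutions \citep{buhlmann2011statistics}. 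Substituting $n_2=C_1 s\log(d)/(m^2C_{\min}(\cA))$ with $C_1$ large enough makes $\|\hat\theta_{n_2}-\theta\|_\infty\le\|\hat\theta_{n_2}-\theta\|_2\le m/2$; then Condition~\ref{con:min_signal} forces $|\hat\theta_{n_2,j}|>m/2>0$ for every $j\in\supp(\theta)$, so that $\supp(\theta)\subseteq\hat S$ while simultaneously $|\hat S|=\cO\big(\phi_{\max}s/C_{\min}(\cA)\big)$.

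Finally, on $\mathcal{E}$ the last $n-n_2\le n$ rounds are a linear bandit living in the $|\hat S|$-dimensional coordinate subspace spanned by $\hat S$: since $\supp(\theta)\subseteq\hat S$, replacing each arm by its restriction to $\hat S$ leaves every mean reward $\langle A_t,\theta\rangle$, hence every suboptimality gap and the identity of $x^*$, unchanged, and the restricted action set still has at most $K$ elements. Invoking the regret guarantee for phased elimination \citep{lattimore2019learning} in dimension $d'=|\hat S|$ with $K$ arms over $n$ rounds gives regret $\cO\big(\sqrt{d'n\log(Kn)}\big)=\cO\big(\sqrt{(\phi_{\max}/C_{\min}(\cA))\,s\,n\log(Kn)}\big)$, the second term of \eqref{eqn:improved_upper}. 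Choosing $\delta=1/n$ makes the contribution of $\mathcal{E}^c$, on which the regret is at most $\cO(n)$, only $\cO(1)$, and adding the two phases completes the argument.

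I expect the main obstacle to be the second step, specifically the Lasso sparsity bound $|\hat S|=\cO(\phi_{\max}s/C_{\min}(\cA))$: it is this bound --- rather than exact support recovery, which would require an irrepresentability-type assumption that is not imposed here --- that produces the $\phi_{\max}/C_{\min}(\cA)$ factor in the final rate, and it has to be combined carefully with the matrix-concentration control of both $\sigma_{\min}(\widehat\Sigma)$ and $\sigma_{\max}(\widehat\Sigma)$ and with the Gaussian-noise event in the Lasso analysis, tracking constants so that the choice of $C_1$ is self-consistent. A secondary technical point is verifying that the phased-elimination regret bound applies verbatim on the data-dependent, possibly rank-deficient restricted action set, where one relies on the Kiefer--Wolfowitz theorem so that the relevant least-squares confidence widths scale with $|\hat S|$ rather than with the ambient dimension $d$.
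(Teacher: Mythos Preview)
Your proposal is correct and follows essentially the same approach as the paper: decompose the regret into the exploration and commitment phases, establish a restricted/minimum-eigenvalue condition on $\widehat\Sigma$, invoke the Lasso $\ell_2$-error bound together with Condition~\ref{con:min_signal} to get $\supp(\theta)\subseteq\hat S$, use the Lasso sparsity bound $|\hat S|\lesssim \phi_{\max}s/C_{\min}(\cA)$, and finish with the phased-elimination guarantee on the restricted model. The only cosmetic differences are that the paper derives the sparsity bound from scratch via the KKT conditions and the in-sample prediction error (rather than citing it), and establishes the restricted-eigenvalue property through the Rudelson--Zhou transfer argument rather than a matrix Chernoff bound; neither changes the structure of the proof.
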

When $C_{\min}(\cA)$ is dimension-free and $m\geq (s\log^2(d)/C_{\min}(\cA)n)^{1/4}$, we reach an $\cO(\sqrt{sn})$ regret upper bound.
The proof is deferred to 
\ifsup
Appendix~\ref{sec:proof_improved_bound}. 
\else
the supplementary material.
\fi
It utilizes the sparsity and variable screening property of Lasso. More precisely, under minimum signal condition, the Lasso estimator can identify 
all the important covariates, i.e., $\supp(\hat{\theta}_{n_1})\supseteq \supp(\theta)$. And the model Lasso selected is sufficiently sparse, i.e. $|\supp(\hat{\theta}_{n_1})|\lesssim s$. Therefore, it is enough to query linear bandits algorithm on $\supp(\hat{\theta}_{n_1})$. 

\begin{remark}
It is possible to remove the dependency of $\phi_{\max}$ in the Eq.~\eqref{eqn:improved_upper} using more dedicated analysis, using theorem 3 in \cite{belloni2013least}. 
The reason we choose a phase elimination type algorithm is that it has the optimal regret guarantee when the size of action set is moderately large. 
When the action set has an infinite number of actions, we could switch to the linear UCB algorithm \citep{abbasi2011improved} or appeal to a discretisation argument.
\end{remark}

\section{Experiment}
 We compare ESTC (our algorithm) with LinUCB \citep{abbasi2011improved} and doubly-robust (DR) lasso bandits \citep{kim2019doubly}. For ESTC, we use the theoretically suggested length of exploration stage. For LinUCB, we use the theoretically suggested confidence interval. For DR-lasso, we use the code made available by the authors on-line.
\begin{itemize}
    \item \textbf{Case 1: linear contextual bandits.} We use the setting in Section 5 of \cite{kim2019doubly} with $N=20$ arms, dimension $d=100$, sparsity $s=5$. At round $t$, we generate the 
action set from $N(0_N, V)$, where $V_{ii} = 1$ and $V_{ik} = \rho^2$ for every $i\neq k$. Larger $\rho$ corresponds to high correlation setting that is more favorable to DR-lasso. 
The noise is from $N(0, 1)$ and  $\|\theta\|_0 = s$.
\item \textbf{Case 2: hard problem instance.} Consider the hard problem instance in the proof of minimax lower bound (Theorem \ref{thm:minimax_lower_bound}), 
including an informative action set and an uninformative action set. Since the size of action set constructed in the hard problem instance grows exponentially with $d$, we uniformly randomly sample  500 actions from the full informative action set and 200 from uninformative action set. 
\end{itemize}

{\textbf{Conclusion:}} The experiments confirm our theoretical findings. Although our theory focuses on the fixed action set setting, ESTC works well in the contextual setting. 
DR-lasso bandits heavily rely on context distribution assumption and almost fail for the hard instance. 
LinUCB suffers in the data-poor regime since it ignores the sparsity information.
 \begin{figure}[h]
 \centering
\includegraphics[width=0.35\linewidth]{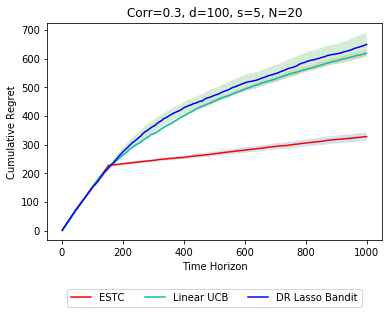}
\includegraphics[width=0.35\linewidth]{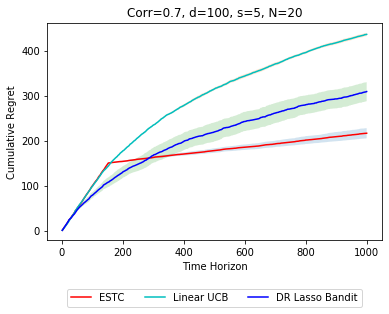}
\includegraphics[width=0.35\linewidth]{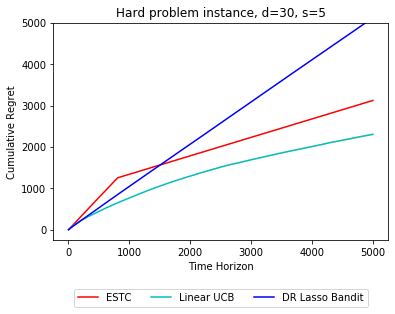}
\includegraphics[width=0.35\linewidth]{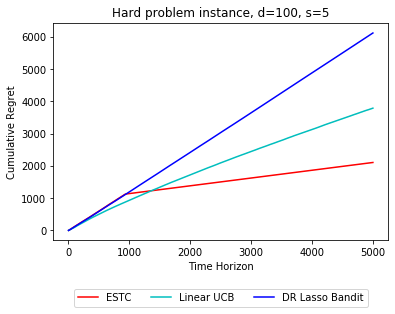}
\label{fig:contextuak}
 \caption{The top two figures are for Case 1 and the bottom two figures are for Case 2.}
\end{figure}

\section{Discussion}

In this paper, we provide a thorough investigation of high-dimensional sparse linear bandits, and show that $\Theta(n^{2/3})$ is the optimal rate
in the data-poor regime. Our work leaves many open problems on how the shape of action set affects the regret that reveals the subtle trade-off between information and regret. For instance, it is unclear how the regret lower bound depends on $C_{\min}(\cA)$ in the data-rich regime and if $C_{\min}(\cA)$ is the best quantity to describe the shape of action set $\cA$.

In another hand, the ESTC algorithm can only achieve optimal regret bound in data poor regime and becomes suboptimal in the data rich regime. It is interesting to have an algorithm to achieve optimal regrets in ``best of two worlds". Information-direct sampling \citep{russo2014learning} might be a good candidate since it delicately balances the trade-off between information and regret which is necessary in the sparse linear bandits.


\paragraph{Broader Impact}
We believe that presented research should be categorized as basic research and we
are not targeting any specific application area. Theorems may inspire new algorithms and theoretical
investigation. The algorithms presented here can be used for many different applications and a
particular use may have both positive or negative impacts. We are not aware of any immediate short
term negative implications of this research and we believe that a broader impact statement is not
required for this paper.

\begin{ack}
Mengdi Wang gratefully acknowledges funding from the U.S. National Science Foundation (NSF) grant CMMI1653435, Air Force Office of Scientific Research (AFOSR) grant FA9550-19-1-020, and C3.ai DTI.
\end{ack}

\bibliographystyle{plainnat}
{\small
\bibliography{ref}
}


\newpage
\clearpage

\appendix
In Appendix \ref{sec:review_hd}, we review some statistical results for sparse linear regression. In Appendix \ref{sec:main_proof}, we provide the proof of main theorems as well as main claims. In Appendix \ref{sec:supporting_lemmas}, we include some supporting lemma for the sake of completeness.
\section{Sparse linear regression}\label{sec:review_hd}

We review some classical results in sparse linear regression. Consider the following sparse linear regression model:
\begin{equation}
    y_i = \langle x_i, \theta^*\rangle+\epsilon_i, i =1,\ldots, n,
\end{equation}
where $\theta^*\in\mathbb R^d$ and $\|\theta^*\|_0=s\leq d$ and the noise $\{\epsilon_i\}_{i=1}^n$ independently follows a zero-mean, $\sigma$-sub-Gaussian distribution. Let the design matrix be $X = (x_1, \ldots, x_n)^{\top}\in\mathbb R^{n\times d}$. Define the Lasso estimator as follows:
\begin{equation*}
    \hat{\theta}_n = \argmin_{\theta}\Big(\frac{1}{n}\sum_{i=1}^n(y_i-\langle x_i, \theta\rangle)^2 + \lambda\|\theta\|_1\Big).
\end{equation*}

\begin{condition}[(Restricted eigenvalues)]\label{con:RE}
Define the cone:
\begin{equation*}
    \mathbb C(S):= \{\Delta\in \mathbb R^d|\|\Delta_{S^c}\|_1\leq 3\|\Delta_S\|_1\},
\end{equation*}
where $S$ is the support set of $\theta^*$. Then there exists some positive constant $\kappa$ such that the design matrix $X\in\mathbb R^{n\times d}$ satisfied the condition 
\begin{equation*}
    \frac{\|X\theta\|_2^2}{n}\geq \kappa \|\theta\|_2^2,
\end{equation*}
for all $\theta\in \mathbb C(S)$.
\end{condition}
\begin{condition}[(Column normalized)]\label{con:column_normal}
Using $X_j\in\mathbb R^n$ to denote the $j$-th column of $X$, we say that $X$ is column-normalized if for all $j=1,2,\ldots, d$,
\begin{equation*}
    \frac{\|X_j\|_2}{\sqrt{n}}\leq 1.
\end{equation*}
\end{condition}
\begin{theorem}\label{thm:Lasso}
Consider an $s$-sparse linear regression and assume design matrix $X\in\mathbb R^{n\times d}$ satisfies the RE condition (Condition \ref{con:RE}) and the column normalization
condition (Condition \eqref{con:column_normal}). Given the Lasso estimator with regularization parameter $\lambda_n=4\sigma\sqrt{\log (d)/n}$, then with probability at least $1-\delta$, 
\begin{itemize}
    \item the estimation error under $\ell_1$-norm (Theorem 7.13 in \cite{wainwright2019high}) of any optimal solution $\hat{\theta}_n$ satisfies 
    \begin{equation*}
    \big\|\hat{\theta}_n -\theta^*\big\|_1\leq \frac{\sigma s}{\kappa}\sqrt{\frac{2\log (2d/\delta)}{n}};
\end{equation*}
    \item the mean square prediction error (Theorem 7.20 in \cite{wainwright2019high}) of any optimal solution $\hat{\theta}_n$ satisfies
\begin{equation*}
    \frac{1}{n}\sum_{i=1}^n\big(x_i^{\top}(\hat{\theta}_n - \theta)\big)^2\leq \frac{9}{\kappa}\frac{s\log (d/\delta)}{n}.
\end{equation*}
\end{itemize}

\end{theorem}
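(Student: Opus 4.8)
The plan is to run the standard two-ingredient argument for the Lasso: a deterministic \emph{basic inequality} coming from the optimality of $\hat{\theta}_n$, combined with a single high-probability bound on the noise that justifies the choice of $\lambda_n$. I adopt the $\tfrac{1}{2n}$-normalization of the squared loss used in the cited theorems (equivalently, rescale $\lambda_n$ relative to Eq.~\eqref{eqn:Lasso_est}). Write $\hat{\Delta} = \hat{\theta}_n - \theta^*$ and substitute $y_i = \langle x_i, \theta^*\rangle + \epsilon_i$. Comparing the objective at $\hat{\theta}_n$ and at $\theta^*$ and expanding the quadratic gives the purely algebraic inequality
\begin{equation*}
\frac{1}{2n}\|X\hat{\Delta}\|_2^2 \leq \frac{1}{n}\langle X^{\top}\epsilon, \hat{\Delta}\rangle + \lambda_n\big(\|\theta^*\|_1 - \|\hat{\theta}_n\|_1\big),
\end{equation*}
valid for every optimal solution $\hat{\theta}_n$.

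The only probabilistic step is to show that the noise term is dominated by the penalty. By H\"older, $\tfrac{1}{n}\langle X^{\top}\epsilon, \hat{\Delta}\rangle \leq \tfrac{1}{n}\|X^{\top}\epsilon\|_{\infty}\|\hat{\Delta}\|_1$, so it suffices to control $\|X^{\top}\epsilon\|_{\infty} = \max_j |X_j^{\top}\epsilon|$. Since $\epsilon$ is $\sigma$-sub-Gaussian and the column-normalization Condition~\ref{con:column_normal} gives $\|X_j\|_2 \leq \sqrt{n}$, each $\tfrac{1}{n}X_j^{\top}\epsilon$ is mean-zero and $\tfrac{\sigma}{\sqrt{n}}$-sub-Gaussian; a union bound over the $d$ columns yields $\tfrac{1}{n}\|X^{\top}\epsilon\|_{\infty} \leq \sigma\sqrt{2\log(2d/\delta)/n}$ with probability at least $1-\delta$. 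Call this event $\mathcal{E}$. With $\lambda_n = 4\sigma\sqrt{\log(2d/\delta)/n}$ (matching the stated rate, the $\delta$-dependence being absorbed into the logarithm), on $\mathcal{E}$ we have $\tfrac{1}{n}\|X^{\top}\epsilon\|_{\infty} \leq \lambda_n/2$, hence the noise term is at most $\tfrac{\lambda_n}{2}\|\hat{\Delta}\|_1$.

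Everything after conditioning on $\mathcal{E}$ is deterministic. First I would establish cone membership $\hat{\Delta} \in \mathbb{C}(S)$: since $\theta^*$ vanishes off $S$, splitting the $\ell_1$ norms across $S$ and $S^c$ and using the triangle inequality gives $\|\theta^*\|_1 - \|\hat{\theta}_n\|_1 \leq \|\hat{\Delta}_S\|_1 - \|\hat{\Delta}_{S^c}\|_1$. Substituting this and the noise bound into the basic inequality, writing $\|\hat{\Delta}\|_1 = \|\hat{\Delta}_S\|_1 + \|\hat{\Delta}_{S^c}\|_1$, and using $\tfrac{1}{2n}\|X\hat{\Delta}\|_2^2 \geq 0$, the terms rearrange to $\|\hat{\Delta}_{S^c}\|_1 \leq 3\|\hat{\Delta}_S\|_1$, so $\hat{\Delta}$ lies in the cone and the restricted-eigenvalue Condition~\ref{con:RE} becomes applicable to it. Carrying the same substitution without discarding the quadratic term gives $\tfrac{1}{n}\|X\hat{\Delta}\|_2^2 \leq 3\lambda_n\|\hat{\Delta}_S\|_1 \leq 3\lambda_n\sqrt{s}\,\|\hat{\Delta}\|_2$, where I used $\|\hat{\Delta}_S\|_1 \leq \sqrt{s}\|\hat{\Delta}_S\|_2 \leq \sqrt{s}\|\hat{\Delta}\|_2$. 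Invoking RE, $\tfrac{1}{n}\|X\hat{\Delta}\|_2^2 \geq \kappa\|\hat{\Delta}\|_2^2$, and cancelling one factor of $\|\hat{\Delta}\|_2$ gives $\|\hat{\Delta}\|_2 \leq 3\lambda_n\sqrt{s}/\kappa$. The $\ell_1$ bound then follows from the cone condition via $\|\hat{\Delta}\|_1 \leq 4\|\hat{\Delta}_S\|_1 \leq 4\sqrt{s}\|\hat{\Delta}\|_2$, and the prediction-error bound follows by feeding $\|\hat{\Delta}\|_2 \leq 3\lambda_n\sqrt{s}/\kappa$ back into $\tfrac{1}{n}\|X\hat{\Delta}\|_2^2 \leq 3\lambda_n\sqrt{s}\|\hat{\Delta}\|_2$; substituting $\lambda_n = 4\sigma\sqrt{\log(2d/\delta)/n}$ reproduces the claimed $s/\kappa$ scaling in both cases, with the precise constants being exactly those recorded in Theorems 7.13 and 7.20 of \cite{wainwright2019high}.

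The main obstacle, and the only genuinely nontrivial ingredient, is the concentration bound defining $\mathcal{E}$: extracting the correct variance proxy $\sigma/\sqrt{n}$ from column normalization and fixing the union-bound constant so that the prescribed $\lambda_n$ simultaneously dominates the noise and delivers probability $1-\delta$. The remaining delicacy is purely logical: because $\hat{\Delta}$ is a random vector, the restricted-eigenvalue condition cannot be applied to it unconditionally --- it is legitimate only after cone membership has been verified on $\mathcal{E}$. Since the statement cites Theorems 7.13 and 7.20 of \cite{wainwright2019high} directly, I would present the above as the self-contained derivation and defer the routine sub-Gaussian tail bound and the bookkeeping of universal constants to those references.
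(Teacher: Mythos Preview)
The paper does not actually prove this theorem: it is stated in the review appendix with the two bullets explicitly attributed to Theorems~7.13 and~7.20 of \cite{wainwright2019high}, and no argument is given beyond that citation. Your proposal correctly reconstructs the standard proof from those references --- basic inequality, sub-Gaussian union bound on $\tfrac{1}{n}\|X^{\top}\epsilon\|_{\infty}$ using column normalization, cone membership from the $\ell_1$-decomposition, and then the restricted-eigenvalue condition to close --- so it is both correct and entirely consistent with what the paper intends.
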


\section{Proofs of main theorems and claims}\label{sec:main_proof}

\subsection{Proof of Claim \ref{claim:regret_lower}}\label{sec:claim_regret_lower}

We first prove the first part. By standard calculations, we have 
\begin{equation*}
    \begin{split}
        R_{\theta}(n) &=\mathbb E_{\theta}\Big[\sum_{t=1}^n\langle x^*, \theta\rangle\Big]-\mathbb E_{\theta}\Big[\sum_{t=1}^n\langle A_t, \theta\rangle \Big]\\
        &= \mathbb E_{\theta}\Big[n(s-1)\varepsilon-\sum_{t=1}^n\ind(A_t\in\cH)\langle A_t, \theta\rangle- \sum_{t=1}^n\ind(A_t\in\cS)\langle A_t, \theta\rangle\Big],
    \end{split}
\end{equation*}
where the last equation is from the definition of $x^*$ in Eq.~\eqref{def:optimal_action}. From the definition of $\cH$ in Eq.~\eqref{eqn:action_set}, the following holds for small enough $\varepsilon$, 
\begin{equation}\label{eqn:T_n}
    \sum_{t=1}^n\ind(A_t\in\cH)\langle A_t, \theta\rangle\leq T_n(\cH)(\kappa(s-1)\varepsilon-1) \leq 0,
\end{equation}
where $T_n(\cH) = \sum_{t=1}^n \ind(A_t\in\cH)$.
Since $\langle A_t, \theta\rangle = \sum_{j=1}^sA_{tj}\varepsilon$ for $A_t\in\cS$, then it holds that
\begin{equation}\label{eqn:bound_R}
    \begin{split}
      R_{\theta}(n)&\geq \mathbb E_{\theta}\Big[  n(s-1)\varepsilon -\sum_{t=1}^n\ind(A_t\in\cS)\sum_{j=1}^{s-1}A_{tj}\varepsilon\Big]\\
      &\geq \mathbb E_{\theta}\Big[ \Big(n(s-1)\varepsilon -\sum_{t=1}^n\ind(A_t\in\cS)\sum_{j=1}^{s-1}A_{tj}\varepsilon\Big)\ind(\cD)\Big]\\
      &\geq \Big( n(s-1)\varepsilon -\frac{n(s-1)\varepsilon}{2}\Big)\mathbb P_{\theta}(\cD)\\
      &=\frac{n(s-1)\varepsilon}{2}\mathbb P_{\theta}(\cD).
    \end{split}
\end{equation}

Second, we derive a regret lower bound of alternative bandit $\tilde{\theta}$.  Denote $\tilde{x}^*$ as the optimal arm of  bandit $\tilde{\theta}$.
By a similar decomposition in Eq.~\eqref{eqn:bound_R},
\begin{equation}\label{eqn:decom5}
    \begin{split}
        R_{\tilde{\theta}}(n) &= \mathbb E_{\tilde{\theta}}\Big[\sum_{t=1}^n\langle \tilde{x}^*, \tilde{\theta}\rangle\Big]-\mathbb E_{\tilde{\theta}}\Big[\sum_{t=1}^n\langle A_t, \tilde{\theta}\rangle \Big]\\
        &= \mathbb E_{\tilde{\theta}}\Big[2n(s-1)\varepsilon - \sum_{t=1}^n\ind(A_t\in\cH)\langle A_t, \tilde{\theta}\rangle - \sum_{t=1}^n\ind(A_t\in\cS)\langle A_t, \tilde{\theta}\rangle\Big]\\
        & \geq \mathbb E_{\tilde{\theta}}\Big[2n(s-1)\varepsilon - \sum_{t=1}^n\ind(A_t\in\cS)\langle A_t, \tilde{\theta}\rangle\Big].
    \end{split}
\end{equation}
where the inequality comes similarly in Eq.~\eqref{eqn:T_n} to show $\sum_{t=1}^n\ind(A_t\in\cH)\langle A_t, \tilde{\theta}\rangle\leq 0$. Next, we will find an upper bound for $\sum_{t=1}^n\ind(A_t\in\cS)\langle A_t, \tilde{\theta}\rangle$. From the definition of $\tilde{\theta}$ in Eq.~\eqref{def:theta_alt}, 
\begin{equation}\label{eqn:decom3}
\begin{split}
     \sum_{t=1}^n\ind(A_t\in\cS)\langle A_t, \tilde{\theta}\rangle &= \sum_{t=1}^n\ind(A_t\in\cS)\langle A_t, \theta + 2\varepsilon\tilde{x}\rangle\\
     &= \sum_{t=1}^n\ind(A_t\in\cS)\langle A_t, \theta\rangle + 2\varepsilon \sum_{t=1}^n\ind(A_t\in\cS)\langle A_t, \tilde{x}\rangle\\
     &\leq\sum_{t=1}^n\ind(A_t\in\cS)\langle A_t, \theta\rangle+2\varepsilon\sum_{t=1}^n\ind(A_t\in\cS)\sum_{j\in\supp(\tilde{x})}|A_{tj}|,
\end{split}
\end{equation}
where the last inequality is from the definition of $\tilde{x}$ in Eq.~\eqref{def:x_tilde}.
To bound the first term, we have
\begin{equation}\label{eqn:decom2}
    \begin{split}
        \sum_{t=1}^n \ind(A_t\in\cS)\langle A_t, \theta \rangle & = \sum_{t=1}^n \ind(A_t\in\cS)\sum_{j=1}^{s-1}A_{tj}\varepsilon\\
        & \leq \varepsilon \sum_{t=1}^n \ind(A_t\in\cS)\sum_{j=1}^{s-1}|A_{tj}|.
    \end{split}
\end{equation}
If all the actions $A_t$ come from $\cS$ which is a $(s-1)$-sparse set, we have 
    \begin{equation*}
    \sum_{t=1}^n \sum_{j=1}^d |A_{tj}|= (s-1)n,
\end{equation*}
which implies 
\begin{equation}\label{eqn:decom4}
\begin{split}
    &\sum_{t=1}^n \ind(A_t\in\cS)\Big(\sum_{j=1}^{s-1}|A_{tj}|+\sum_{j\in\supp(\tilde{x})}|A_{tj}|\Big)\leq \sum_{t=1}^n \ind(A_t\in\cS)\sum_{j=1}^d |A_{tj}| \leq (s-1)n,\\
    &\sum_{t=1}^n \ind(A_t\in\cS)\sum_{j=1}^{s-1}|A_{tj}|\leq (s-1)n-\sum_{t=1}^n \ind(A_t\in\cS)\sum_{j\in\supp(\tilde{x})}|A_{tj}|.
\end{split}
\end{equation}
Combining with Eq.~\eqref{eqn:decom2},
\begin{equation*}
\begin{split}
    \sum_{t=1}^n \ind(A_t\in\cS)\langle A_t, \theta \rangle \leq \varepsilon \Big((s-1)n - \sum_{t=1}^n\ind(A_t\in\cS)\sum_{j\in\supp(\tilde{x})}|A_{tj}|\Big)
\end{split}
\end{equation*}
Plugging the above bound into Eq.~\eqref{eqn:decom3}, it holds that 
\begin{equation}\label{eqn:decom6}
     \sum_{t=1}^n\ind(A_t\in\cS)\langle A_t, \tilde{\theta}\rangle \leq \varepsilon (s-1)n + \varepsilon \sum_{t=1}^n\ind(A_t\in\cS)\sum_{j\in\supp(\tilde{x})}|A_{tj}|.
\end{equation}
When the event $\cD^c$ (the complement event of $\cD$) happen, we have
\begin{equation*}
    \sum_{t=1}^n \ind(A_t\in\cS)\sum_{j=1}^{s-1}|A_{tj}|\geq \sum_{t=1}^n \ind(A_t\in\cS)\sum_{j=1}^{s-1}A_{tj}\geq \frac{n(s-1)}{2}.
\end{equation*}
Combining with Eq.~\eqref{eqn:decom4}, we have under event $\cD^c$, 
\begin{equation}\label{eqn:decom7}
    \sum_{t=1}^n\ind(A_t\in\cS)\sum_{j\in\supp(\tilde{x})}|A_{tj}| \leq \frac{n(s-1)}{2}.
\end{equation}
Putting Eqs.~\eqref{eqn:decom5}, \eqref{eqn:decom6}, \eqref{eqn:decom7} together, it holds that
\begin{equation}\label{eqn:bound_R_alt}
     R_{\tilde{\theta}}(n) \geq \frac{n(s-1)\varepsilon}{2}\mathbb P_{\tilde{\theta}}(\cD^c). 
\end{equation}
This ends the proof.

\subsection{Proof of Claim \ref{claim:KL_bound}}\label{sec:claim_KL}
From the divergence decomposition lemma (Lemma \ref{eqn:inf-processing} in the appendix), we have
\begin{equation*}
    \begin{split}
        \KL\big(\mathbb P_{\theta}, \mathbb P_{\tilde{\theta}}\big)&= \frac{1}{2}\mathbb E _{\theta}\Big[\sum_{t=1}^n\langle A_t, \theta-\tilde{\theta}\rangle^2\Big]\\
         &= 2\varepsilon^2\mathbb E _{\theta}\Big[\sum_{t=1}^n\langle A_t, \tilde{x}\rangle^2\Big].
    \end{split}
\end{equation*}
To prove the claim, we use a simple argument ``minimum is always smaller than the average". We decompose the following summation over action set $\cS'$ defined in Eq.~\eqref{def:S_prime},
\begin{equation*}
    \begin{split}
        \sum_{x\in\cS'} \sum_{t=1}^n \langle A_t, x \rangle^2&= \sum_{x\in\cS'} \sum_{t=1}^n \Big(\sum_{j=1}^dx_j A_{tj}\Big)^2\\
      &=  \sum_{x\in\cS'} \sum_{t=1}^n \Big(\sum_{j=1}^d\big(x_j A_{tj}\big)^2 + 2\sum_{i<j}x_ix_jA_{ti}A_{tj}\Big).
    \end{split}
\end{equation*}
We bound the above two terms separately. 
\begin{enumerate}
    \item To bound the first term, we observe that 
    \begin{equation}\label{eqn:decom_1}
    \begin{split}
         &\sum_{x\in\cS'} \sum_{t=1}^n \sum_{j=1}^d\big(x_j A_{tj}\big)^2 \\
         =& \sum_{x\in\cS'} \sum_{t=1}^n\ind(A_t\in \cS)\sum_{j=1}^d|x_j A_{tj}|+ \sum_{x\in\cS'} \sum_{t=1}^n\ind(A_t\in \cH)\sum_{j=1}^d(x_j A_{tj})^2,
    \end{split}
    \end{equation}
    since both $x_j, A_{tj}$ can only take $-1, 0, +1$ if $A_t\in\cS$. If all the $A_t$ come from $\cS$, we have 
    \begin{equation*}
    \sum_{t=1}^n \sum_{j=1}^d |A_{tj}|= (s-1)n.
\end{equation*}
This implies 
\begin{equation*}
    \sum_{t=1}^n \ind(A_t\in\cS)\sum_{j=1}^d |A_{tj}| \leq (s-1)n.
\end{equation*}
Since $x\in\cS'$ that is $(s-1)$-sparse, we have $\sum_{j=1}^d|x_j A_{tj}| \leq s-1$. Therefore, we have
\begin{equation}\label{eqn:bound1}
\begin{split}
    \sum_{x\in\cS'}\sum_{t=1}^n \ind(A_t\in\cS)\sum_{j=1}^d |x_jA_{tj}|\leq (s-1)n \binom{d-s-1}{s-2}.
\end{split}
\end{equation}
In addition, since the action in $\cS'$ is $s-1$-sparse and has 0 at its last coordinate, we have
\begin{equation}\label{eqn:bound11}
     \sum_{x\in\cS'} \sum_{t=1}^n\ind(A_t\in \cH)\sum_{j=1}^d(x_j A_{tj})^2\leq \kappa^2|\cS'|T_n(\cH)(s-1).
\end{equation}
Putting Eqs.~\eqref{eqn:decom_1}, \eqref{eqn:bound1} and \eqref{eqn:bound11} together,
\begin{equation}\label{eqn:sum1}
     \sum_{x\in\cS'} \sum_{t=1}^n \sum_{j=1}^d\big(x_j A_{tj}\big)^2 \leq (s-1)n \binom{d-s-1}{s-2} +  \kappa^2|\cS'|T_n(\cH)(s-1) .
\end{equation}
\item To bound the second term, we observe 
\begin{equation*}
    \sum_{x\in\cS'} \sum_{t=1}^n  2\sum_{i<j}x_ix_jA_{ti}A_{tj}=2\sum_{t=1}^n \sum_{i<j} \sum_{x\in\cS'} x_ix_jA_{ti}A_{tj}.
\end{equation*}
From the definition of $\cS'$, $x_ix_j$ can only take values of $\{1*1, 1*-1, -1*1, -1*-1, 0\}$. This symmetry implies 
\begin{equation*}
    \sum_{x\in\cS'}x_ix_jA_{ti}A_{tj} = 0,
\end{equation*}
which implies 
\begin{equation}\label{eqn:sum2}
     \sum_{x\in\cS'} \sum_{t=1}^n  2\sum_{i<j}x_ix_jA_{ti}A_{tj} = 0.
\end{equation}
\end{enumerate}
Combining Eqs.~\eqref{eqn:sum1} and \eqref{eqn:sum2} together, we have
\begin{equation*}
\begin{split}
\sum_{x\in\cS'} \sum_{t=1}^n \langle A_t, x \rangle^2&= \sum_{x\in\cS'} \sum_{t=1}^n \sum_{j=1}^d|x_j A_{tj}|\\
&\leq (s-1)n \binom{d-s-1}{s-2}+\kappa^2|\cS'|T_n(\cH)(s-1).
\end{split}
\end{equation*}
Therefore, we use the fact that the minimum of $n$ points is always smaller than its average,
\begin{equation*}
\begin{split}
     \mathbb E_{\theta}\Big[\sum_{t=1}^n \langle A_t, \tilde{x} \rangle^2\Big] &=  \min_{x\in\cS'}\mathbb E_{\theta}\Big[\sum_{t=1}^n \langle A_t, x\rangle^2\Big]\\
     &\leq \frac{1}{|\cS'|}\sum_{x\in\cS'}\mathbb E_{\theta}\Big[\sum_{t=1}^n \langle A_t, x \rangle^2\Big]\\
     &=\mathbb E_{\theta}\Big[\frac{1}{|\cS'|}\sum_{x\in\cS'} \sum_{t=1}^n \langle A_t, x \rangle^2\Big]\\
    &\leq \frac{(s-1)n\binom{d-s-1}{s-2}+\mathbb E_{\theta}[T_n(\cH)](s-1)\binom{d-s}{s-1}}{\binom{d-s}{s-1}}\\
    &\leq \frac{(s-1)^2 n}{d}+\kappa^2\mathbb E_{\theta}[T_n(\cH)](s-1).
\end{split}
\end{equation*}
This ends the proof of the claim of Eq.~\eqref{eqn:KL_bound}.

\subsection{Proof of Theorem \ref{thm:upper_bound}: regret upper bound}\label{sec:proof_upper}
\textbf{Step 1: regret decomposition.} Suppose $R_{\max}$ is an upper bound of maximum expected reward such that $ \max_{x\in\cA}\langle x,\theta \rangle\leq R_{\max}$. We decompose the regret of ESTC as follows:
\begin{equation*}
    \begin{split}
           R_{\theta}(n) &= \mathbb E_{\theta}\Big[\sum_{t=1}^n\big\langle \theta, x^*-A_t \big\rangle\Big]\\
    &= \mathbb E_{\theta}\Big[\sum_{t=1}^{n_1}\big\langle \theta, x^*-A_t \big\rangle + \sum_{t = n_1+1}^n\big\langle \theta, x^*-A_t\big\rangle \Big]\\
    &\leq \mathbb E_{\theta}\Big[2n_1R_{\max} +\sum_{t=n_1+1}^n\big\langle \theta-\hat{\theta}_{n_1}, x^*-A_t\big\rangle + \sum_{t=n_1+1}^n\big\langle \hat{\theta}_{n_1}, x^*-A_t\big\rangle \Big].
    \end{split}
\end{equation*}
Since we take greedy actions when $t\geq n_1+1$, it holds that $\langle x^*, \hat{\theta}_{n_1}\rangle\leq \langle A_t, \hat{\theta}_{n_1}\rangle$. This implies
\begin{equation}\label{eqn:regret_decom}
\begin{split}
    R_{\theta}(n)  &\leq \mathbb E_{\theta}\Big[2n_1R_{\max} +\sum_{t=n_1+1}^n\big\langle \theta-\hat{\theta}_{n_1}, x^*-A_t\big\rangle \Big]\\
    &\leq \mathbb E_{\theta}\Big[2n_1R_{\max} + \sum_{t=n_1+1}^n\big\|\theta-\hat{\theta}_{n_1}\big\|_1\big\|x^*-A_t\big\|_{\infty}\Big].
\end{split}
\end{equation}

\textbf{Step 2: fast sparse learning.} It remains to bound the estimation error of $\hat{\theta}_{n_1}-\theta$ in $\ell_1$-norm. Denote the design matrix $X = (A_1, \ldots, A_{n_1})^{\top}\in\mathbb R^{n_1\times d}$, where $A_1, \ldots, A_{n_1}$ are independently drawn according to sampling distribution $\hat{\mu}$. To achieve a fast rate, one need to ensure $X$ satisfies restricted eigenvalue condition (Condition \ref{con:RE} in the appendix). Denote the uncentered empirical covariance matrix  $\hat{\Sigma} = X^{\top}X/n_1$. It is easy to see
\begin{equation*}
\Sigma =  \mathbb E(\hat{\Sigma}) = \int_{x\in \cA}xx^{\top} d\hat{\mu}(x),
\end{equation*}
where $\hat{\mu}$ is the solution of optimization problem Eq.~\eqref{def:opti}.
To lighten the notation, we write $C_{\min} = C_{\min}(\cA)$. Since action set $\cA$ spans $\mathbb R^d$, we know that $\sigma_{\min}(\Sigma) = C_{\min}>0$. And we also denote $\sigma_{\max}(\Sigma) = C_{\max}$ and the notion of restricted eigenvalue as follows.
\begin{definition}
Given a symmetric matrix $H\in\mathbb R^{d\times d}$ and integer $s\geq 1$, and $L>0$, the restricted eigenvalue of $H$ is defined as
\begin{equation*}
    \phi^2(H, s, L):=\min_{\cS\subset [d], |\cS|\leq s}\min_{\theta\in\mathbb R^d}\Big\{\frac{\langle \theta, H\theta \rangle}{\|\theta_{\cS}\|_1^2}: \theta\in\mathbb R^d, \|\theta_{\cS^c}\|_1\leq L\|\theta_{\cS}\|_1\Big\}.
\end{equation*}
\end{definition}

It is easy to see $X\Sigma^{-1/2}$ has independent sub-Gaussian rows with sub-Gaussian norm $\|\Sigma^{-1/2}A_1\|_{\psi_2} = C_{\min}^{-1/2}$ (see \cite{vershynin2010introduction} for a precise definition of sub-Gaussian rows and sub-Gaussian norms). According to Theorem 10 in \cite{javanmard2014confidence} (essentially from Theorem 6 in \cite{rudelson2013reconstruction}), if the population covariance matrix 
satisfies the restricted eigenvalue condition, the empirical covariance matrix satisfies it as well with high
probability. Specifically, suppose the rounds in the exploration phase satisfies  $n_1\geq 4c_*mC_{\min}^{-2} \log (ed/m)$ for some $c_*\leq 2000$ and $m=10^4s C^2_{\max}/\phi^2(\Sigma, s, 9)$. Then the following holds:
\begin{equation*}
    \mathbb P\Big(\phi(\hat{\Sigma}, s, 3)\geq \frac{1}{2}\phi(\Sigma, s, 9)\Big)\geq 1-2\exp(-n_1/(4c_*C_{\min}^{-1/2})).
\end{equation*}
 Noticing that $\phi(\Sigma, s, 9)\geq C_{\min}^{1/2}$, it holds that
\begin{equation*}
     \mathbb P\Big(\phi^2(\hat{\Sigma}, s, 3)\geq \frac{C_{\min}}{2}\Big)\geq 1-2\exp(-c_1n_1),
\end{equation*}
where $c_1 = 1/(4c^*C_{\min}^{-1/2})$.
This guarantees $\hat{\Sigma}$ satisfies Condition \ref{con:RE} in the appendix with $\kappa = C_{\min}/2$. It is easy to see Condition \ref{con:column_normal} holds automatically. Applying Theorem \ref{thm:Lasso} in the appendix of the Lasso error bound, it implies:
\begin{equation*}
    \big\|\hat{\theta}_{n_1} -\theta^*\big\|_1\leq \frac{2}{C_{\min}}\sqrt{\frac{2s^2(\log (2d)+\log(n_1))}{n_1}}.
\end{equation*}
with probability at least $1-\exp(-n_1)$. 

\textbf{Step 3: optimize the length of exploration.} Define an event $\cE$ as follows:
\begin{equation*}
    \cE = \Big\{\phi(\hat{\Sigma}, s, 3)\geq \frac{C_{\min}^{1/2}}{2},   \big\|\hat{\theta}_{n_1} -\theta^*\big\|_1\leq \frac{2}{C_{\min}}\sqrt{\frac{2s^2(\log (2d)+\log(n_1))}{n_1}}\Big\}.
\end{equation*}
We know that $\mathbb P(\cE)\geq 1- 3\exp(-c_1n_1)$. Note that $\|x^*-A_t\|_{\infty}\leq 2$. According to Eq.~\eqref{eqn:regret_decom}, we have
\begin{equation*}
    \begin{split}
        R_{\theta}(n)&\leq \mathbb E_{\theta}\Big[\Big(2n_1R_{\max} + \sum_{t=n_1+1}^n\big\|\theta-\hat{\theta}_{n_1}\big\|_1\big\|x^*-A_t\big\|_{\infty}\Big)\ind(\cE)\Big] + nR_{\max}\mathbb P(\cE^c)\\
        &\leq n_1R_{\max} + (n-n_1)\frac{4}{C_{\min}}\sqrt{\frac{2s^2 (\log (2d)+\log(n_1))}{n_1}}2 + 3nR_{\max}\exp(-c_1n_1)
    \end{split}
\end{equation*}
with probability at least $1-\delta$. By choosing $n_1 = n^{2/3}(s^2\log (2d))^{1/3}R_{\max}^{-2/3} (2/C_{\min}^2)^{1/3}$, we have
\begin{equation*}
    R_{\theta}(n)\leq (sn)^{2/3}(\log (2d))^{1/3}R_{\max}^{1/3}(\frac{2}{C_{\min}^2})^{1/3} + 3nR_{\max}\exp(-c_1n_1).
\end{equation*}
We end the proof.

\subsection{Proof of Theorem \ref{thm:improved_upper_bound}: improved regret upper bound}\label{sec:proof_improved_bound}

We start from a simple regret decomposition based on feature selection step and restricted linear bandits step: 
\begin{equation*}
    \begin{split}
           R_{\theta}(n) &= \mathbb E_{\theta}\Big[\sum_{t=1}^n\big\langle \theta, x^*-A_t \big\rangle\Big]\\
    &= \mathbb E_{\theta}\Big[2n_2 R_{\max} + \sum_{t = n_2+1}^n\big\langle \theta, x^*-A_t\big\rangle \Big].
    \end{split}
\end{equation*}
\textbf{Step 1: sparsity property of Lasso.} We first prove that the Lasso solution is sufficiently sparse. The following proof is mainly from \cite{bickel2009simultaneous} with minor changes. To be self-contained, we reproduce it here.   Recall that the Lasso estimator in the feature selection stage is defined as 
\begin{equation*}
      \hat{\theta} = \argmin_{\theta\in\mathbb R^d}\Big(\frac{1}{n_2}\sum_{t=1}^{n_2}\big(Y_t-\langle A_t, \theta\rangle\big)^2 + \lambda_2\|\theta\|_1\Big).
\end{equation*}
Define random variables $V_j = \frac{1}{n_2}\sum_{t=1}^{n_2}A_{tj}\eta_t$ for $j\in[d]$ and $\eta_t$ is the noise. Since $\|A_{t}\|_{\infty}\leq 1$, standard Hoeffding's inequality (Proposition 5.10 in \cite{vershynin2010introduction}) implies
\begin{equation*}
    \mathbb P\Big(\big|\sum_{t=1}^{n_2}A_{tj}\eta_t\big|\geq \varepsilon\Big) \leq \exp\Big(-\frac{\varepsilon^2}{2n_2}\Big).
\end{equation*}
Define an event $\cE$ as 
\begin{equation*}
    \cE = \bigcup_{j=1}^d \Big\{|V_j|\leq \sqrt{\frac{4\log (d)}{n_2}}\Big\}.
\end{equation*}
Using an union bound, we have 
\begin{equation*}
    \mathbb P(\cE^c) \leq 1/d.
\end{equation*}

From the Karush–Kuhn–Tucker (KKT) condition, the solution $\hat{\theta}$ satisfies
\begin{equation}\label{eqn:KKT}
    \begin{split}
       & \frac{1}{n_2}\sum_{t=1}^{n_2} A_{tj}^{\top}(Y_t - A_t^{\top}\hat{\theta})= \lambda_2 \text{sign}(\hat{\theta}_{j}), \ \text{if} \ \hat{\theta}_j\neq 0;\\
       &\Big|\frac{1}{n_2}\sum_{t=1}^{n_2}A_{tj}^{\top}(Y_t - A_t^{\top}\hat{\theta})\Big|\leq \lambda_2, \ \text{if} \ \hat{\theta}_j = 0.
    \end{split}
\end{equation}
Therefore,
\begin{equation*}
    \frac{1}{n_2}\sum_{t=1}^{n_2}A_{tj}(A_t^{\top}\theta - A_t^{\top}\hat{\theta}) = \frac{1}{n_2}\sum_{i=1}^{n_2}A_{tj}(Y_t - A_t^{\top}\hat{\theta})-\frac{1}{n_2}\sum_{i=1}^{n_2}A_{tj}\eta_t
\end{equation*}
Since $\lambda_2= 4\sqrt{\log (d)/n_2}$, under event $\cE$, we have 
\begin{equation*}
    \Big| \frac{1}{n_2}\sum_{t=1}^{n_2}A_{tj}(A_t^{\top}\theta - A_t^{\top}\hat{\theta})\Big| \geq \lambda_2/2, \ \text{if} \ \hat{\theta}_j\neq 0.
\end{equation*}
And
\begin{equation*}
    \begin{split}
        \frac{1}{n_2^2}\sum_{j=1}^d \Big(\sum_{t=1}^{n_2}A_{tj}(A_t^{\top}\theta - A_t^{\top}\hat{\theta})\Big)^2&\geq  \sum_{j:\hat{\theta}_j\neq 0} \Big(\frac{1}{n_2}\sum_{t=1}^{n_2}A_{tj}(A_t^{\top}\theta - A_t^{\top}\hat{\theta})\Big)^2\\
        &\geq |\supp(\hat{\theta}_{n_2})|\lambda_2^2/4.
   \end{split}
\end{equation*}

On the other hand, let $X = (A_1, \ldots, A_{n_2})^{\top}\in\mathbb R^{n_2\times d}$ and $\phi_{\max} = \sigma_{\max}(XX^{\top}/n_2)$. Then we have 
\begin{equation*}
    \begin{split}
         &\frac{1}{n_2^2}\sum_{j=1}^d \Big(\sum_{t=1}^{n_2}A_{tj}\Big(A_t^{\top}\theta - A_t^{\top}\hat{\theta}\Big)\Big)^2\\
         =&\frac{1}{n_2^2} \Big(X\theta - X\hat{\theta}\Big)^{\top}XX^{\top}\Big(X\theta - X\hat{\theta}\Big)\leq \phi_{\max}\frac{1}{n_2}\|X\hat{\theta}-X\theta\|_2^2.
    \end{split}
\end{equation*}
Therefore, with probability at least $1-1/d$,
\begin{equation}\label{eqn:123}
    |\supp(\hat{\theta}_{n_2})| \leq \frac{4\phi_{\max}}{\lambda_2^2 n_2}\|X\hat{\theta}-X\theta\|_2^2.
\end{equation}
To lighten the notation, we write $C_{\min} = C_{\min}(\cA)$. As proven in Section \ref{sec:proof_upper}, $X^{\top}X/n_2$ satisfies Condition \ref{con:RE}  with $\kappa = C_{\min}/2$ when $n_2\gtrsim s\log (d)$. Applying the in-sample prediction error bound in Theorem \ref{thm:Lasso}, we have with probability at least $1-1/d$,
\begin{equation}\label{eqn:2}
   \frac{1}{n_2} \big\|X\hat{\theta} - X\theta\big\|_2^2 \leq \frac{9}{C_{\min}}\frac{s\log (d)}{n_2}.
\end{equation}
Putting Eqs.~\eqref{eqn:123} and \eqref{eqn:2} together, we have with probability at least $1-2/d$.
\begin{equation}\label{eqn:support}
    |\supp(\hat{\theta})| \leq \frac{9\phi_{\max}s}{C_{\min}}.
\end{equation}

\textbf{Step 2: variable screening property of Lasso.} Under Condition \ref{con:min_signal} and using Theorem \ref{thm:Lasso}, it holds that with probability at least $1-1/d$,
\begin{equation*}
   \min_{j\in \supp(\theta)} |\theta_j| > \big\|\hat{\theta} -\theta\big\|_2\geq \big\|\hat{\theta} -\theta\big\|_{\infty}.
\end{equation*}
If there is a $j\in \supp(\theta)$ but $j\notin\supp(\hat{\theta})$, we have 
\begin{equation*}
    |\hat{\theta}_j-\theta_j| = |\theta_j|> \big\|\hat{\theta} -\theta\big\|_{\infty}.
\end{equation*}
On the other hand,
\begin{equation*}
    |\hat{\theta}_j-\theta_j|\leq \big\|\hat{\theta} -\theta\big\|_{\infty},
\end{equation*}
which leads a contradiction. Now we conclude that $\supp(\hat{\theta})\supseteq \supp(\theta)$. We reproduce Theorem 22.1 in \cite{lattimore2018bandit} for the regret bound of phase elimination algorithm for stochastic linear bandits with finitely-many arms.
\begin{theorem}
The $n$-steps regret of phase elimination algorithm satisfies
\begin{equation*}
    R_n \leq C\sqrt{nd \log(Kn)},
\end{equation*}
for an appropriately chosen universal constant $C>0$.
\end{theorem}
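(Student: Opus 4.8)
The plan is to prove this as a self-contained regret bound for the phased-elimination algorithm run on a finite action set $\cA\subseteq\mathbb R^d$ with $|\cA|=K$, following the template of Chapter~22 of \cite{lattimore2018bandit}. The algorithm proceeds in phases $\ell=1,2,\ldots$ and maintains a shrinking active set $\cA_\ell\subseteq\cA$ with $\cA_1=\cA$. In phase $\ell$ I would fix a target accuracy $\varepsilon_\ell=2^{-\ell}$ and compute a near $G$-optimal design $\pi_\ell\in\cP(\cA_\ell)$, namely a distribution approximately minimizing $\max_{a\in\cA_\ell}\|a\|_{V(\pi)^{-1}}^2$, where $V(\pi)=\sum_{a}\pi(a)aa^\top$. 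The Kiefer--Wolfowitz theorem supplies the crucial fact that such a design exists with $\max_{a\in\cA_\ell}\|a\|_{V(\pi_\ell)^{-1}}^2\le 2d$ and support of size at most $d(d+1)/2$. Each active arm $a$ is then pulled $T_\ell(a)=\lceil 2d\,\pi_\ell(a)\varepsilon_\ell^{-2}\log(K\ell(\ell+1)/\delta)\rceil$ times, a least-squares estimate $\hat\theta_\ell$ is formed from the phase-$\ell$ data, and an arm $a$ is eliminated whenever $\max_{b\in\cA_\ell}\langle\hat\theta_\ell,b-a\rangle>2\varepsilon_\ell$.

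First I would establish the per-phase concentration. Writing $V_\ell=\sum_a T_\ell(a)aa^\top$, the choice of counts gives $V_\ell\succeq 2d\varepsilon_\ell^{-2}\log(K\ell(\ell+1)/\delta)\,V(\pi_\ell)$, so that for every active arm the Kiefer--Wolfowitz bound yields $\|a\|_{V_\ell^{-1}}^2\le\varepsilon_\ell^2/\log(K\ell(\ell+1)/\delta)$. Since $\langle\hat\theta_\ell-\theta,a\rangle$ is conditionally Gaussian with variance $\|a\|_{V_\ell^{-1}}^2$, a Gaussian tail bound together with a union bound over the at most $K$ active arms and over all phases (the $\ell(\ell+1)$ factor makes the phase sum converge) shows that the good event
\begin{equation*}
    \cG=\Big\{\,|\langle\hat\theta_\ell-\theta,\,a\rangle|\le\tfrac12\varepsilon_\ell \ \text{ for all } \ell \text{ and all } a\in\cA_\ell\,\Big\}
\end{equation*}
has probability at least $1-\delta$; on $\cG$ every difference $\langle\hat\theta_\ell-\theta,a-b\rangle$ is bounded by $\varepsilon_\ell$.

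Next I would verify correctness of the elimination rule on $\cG$. The optimal arm $x^*$ is never removed, because for any $b$ one has $\langle\hat\theta_\ell,b-x^*\rangle\le\langle\theta,b-x^*\rangle+\varepsilon_\ell\le\varepsilon_\ell<2\varepsilon_\ell$, so $x^*\in\cA_\ell$ for every $\ell$. Conversely, any arm surviving into $\cA_{\ell+1}$ satisfies $\langle\hat\theta_\ell,x^*-a\rangle\le 2\varepsilon_\ell$, whence its gap obeys $\Delta_a=\langle\theta,x^*-a\rangle\le 2\varepsilon_\ell+\varepsilon_\ell=3\varepsilon_\ell$; consequently the arms active in phase $\ell$ have $\Delta_a\le 3\varepsilon_{\ell-1}=6\varepsilon_\ell$. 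The regret incurred in phase $\ell$ is then at most $\sum_a T_\ell(a)\Delta_a\le 6\varepsilon_\ell T_\ell$, where $T_\ell\le d(d+1)/2+2d\varepsilon_\ell^{-2}\log(K\ell(\ell+1)/\delta)\lesssim d\varepsilon_\ell^{-2}\log(Kn)$, so that the phase-$\ell$ regret is $\lesssim d\varepsilon_\ell^{-1}\log(Kn)$. Because the horizon constraint $\sum_{\ell\le L}T_\ell\le n$ forces the final accuracy to satisfy $\varepsilon_L\gtrsim\sqrt{d\log(Kn)/n}$, the geometric sum $\sum_{\ell\le L}d\varepsilon_\ell^{-1}\log(Kn)$ is dominated by its last term $\asymp d\varepsilon_L^{-1}\log(Kn)\lesssim\sqrt{nd\log(Kn)}$. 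Choosing $\delta=1/n$ renders the contribution of $\cG^c$ negligible, since on $\cG^c$ the regret is at most $n\max_a\Delta_a$, a bounded quantity multiplied by $\delta$, giving the claimed $R_n\le C\sqrt{nd\log(Kn)}$.

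The main obstacle I expect is the interplay between the $G$-optimal design and the concentration step: one must obtain, uniformly over all $K$ arms and all phases simultaneously, the estimate $|\langle\hat\theta_\ell-\theta,a\rangle|\lesssim\varepsilon_\ell$ while keeping only a logarithmic $\log(Kn)$ factor in the sample counts, which is exactly what the Kiefer--Wolfowitz variance bound $\|a\|_{V(\pi_\ell)^{-1}}^2\le 2d$ buys. Equally delicate is the accounting insight that the total regret is governed by the \emph{last} phase, so that the horizon budget $\sum_\ell T_\ell\le n$ must be converted into the lower bound $\varepsilon_L\gtrsim\sqrt{d\log(Kn)/n}$; combining these two facts is what produces the final $\sqrt{nd\log(Kn)}$ scaling.
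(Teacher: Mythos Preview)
The paper does not actually prove this statement: it is simply quoted verbatim as Theorem~22.1 of \cite{lattimore2018bandit} and then applied as a black box in the proof of Theorem~\ref{thm:improved_upper_bound}. Your proposal is a faithful and essentially correct reconstruction of the standard phased-elimination argument from that reference (G-optimal design via Kiefer--Wolfowitz, per-phase least-squares concentration, geometric summation over phases, and the horizon constraint $\sum_\ell T_\ell\le n$ forcing $\varepsilon_L\gtrsim\sqrt{d\log(Kn)/n}$). The only minor caveats are that the constants in the concentration step need slight adjustment to get the $\tfrac12\varepsilon_\ell$ deviation with the stated sample counts, and the $d(d+1)/2$ rounding overhead from the design support is not strictly dominated by $\sqrt{nd\log(Kn)}$ without a mild assumption on $n$ versus $d$; both issues are standard and do not affect the approach.
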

Together with Eq.~\eqref{eqn:support}, we argue the regret of running phase elimination algorithm  (Section 22 in \cite{lattimore2018bandit}) on $\supp(\hat{\theta})$ for the rest $n-n_2$ rounds can be upper bounded by 
\begin{equation*}
    \mathbb E_{\theta}\Big[\sum_{t = n_2+1}^n\big\langle \theta, x^*-A_t\big\rangle \Big]\leq C\sqrt{\frac{9\phi_{\max}}{C_{\min}}s(n-n_2)\log (K(n-n_2))}.
\end{equation*}
This ends the proof.

\section{Supporting lemmas}\label{sec:supporting_lemmas}

\begin{lemma}[Bretagnolle-Huber inequality]\label{lem:kl}
Let $\mathbb P$ and $\tilde{\mathbb P}$ be two probability measures on the same measurable space $(\Omega,\cF)$. Then for any event $\cD\in \cF$,
\begin{equation}\label{eqn:kl}
\mathbb P(\cD) + \tilde{\mathbb P}(\cD^c) \geq \frac{1}{2} \exp\left(-\text{KL}(\mathbb P, \tilde{\mathbb P})\right)\,,
\end{equation}
where $\cD^c$ is the complement event of $\cD$ ($\cD^c = \Omega\setminus \cD$) and $\text{KL}(\mathbb P, \tilde{\mathbb P})$ is the KL divergence between $\PP$ and $\tilde{\mathbb P}$, which is defined as $+\infty$, if $\PP$ is not absolutely continuous with respect to $\tilde{\mathbb P}$, and is $\int_\Omega d\PP(\omega) \log \frac{d\PP}{d\tilde{\mathbb P}}(\omega)$ otherwise.
\end{lemma}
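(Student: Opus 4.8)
The plan is to prove \eqref{eqn:kl} through the classical chain of inequalities relating the two‑point testing error, the Hellinger affinity (Bhattacharyya coefficient), and the KL divergence. First I would dispose of the degenerate case: if $\text{KL}(\mathbb{P},\tilde{\mathbb{P}}) = +\infty$ — in particular if $\mathbb{P}$ is not absolutely continuous with respect to $\tilde{\mathbb{P}}$ — then the right‑hand side of \eqref{eqn:kl} equals $0$ and the inequality is trivial because probabilities are nonnegative. So assume the divergence is finite, fix a common dominating $\sigma$‑finite measure $\nu$ (e.g.\ $\nu = \mathbb{P} + \tilde{\mathbb{P}}$), and write $p = d\mathbb{P}/d\nu$ and $q = d\tilde{\mathbb{P}}/d\nu$.

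Next I would reduce the error sum to the overlap mass of the two distributions. For any event $\cD$,
\begin{equation*}
\mathbb{P}(\cD) + \tilde{\mathbb{P}}(\cD^c) = \int_{\cD} p\, d\nu + \int_{\cD^c} q\, d\nu \geq \int_{\cD} \min(p,q)\, d\nu + \int_{\cD^c} \min(p,q)\, d\nu = \int \min(p,q)\, d\nu ,
\end{equation*}
so it suffices to bound $\int \min(p,q)\, d\nu$ from below by $\tfrac12 \exp(-\text{KL}(\mathbb{P},\tilde{\mathbb{P}}))$. I would then introduce the Hellinger affinity $\int \sqrt{pq}\, d\nu$ as the bridge. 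By Cauchy–Schwarz,
\begin{equation*}
\int \sqrt{pq}\, d\nu = \int \sqrt{\min(p,q)}\,\sqrt{\max(p,q)}\, d\nu \leq \Big( \int \min(p,q)\, d\nu \Big)^{1/2} \Big( \int \max(p,q)\, d\nu \Big)^{1/2},
\end{equation*}
and since $\min(p,q) + \max(p,q) = p + q$ we get $\int \max(p,q)\, d\nu = 2 - \int \min(p,q)\, d\nu \leq 2$, hence $\int \min(p,q)\, d\nu \geq \tfrac12 \big( \int \sqrt{pq}\, d\nu \big)^2$.

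Finally I would lower bound the affinity using Jensen's inequality applied to the convex map $\exp$: writing the affinity as an expectation under $\mathbb{P}$,
\begin{equation*}
\int \sqrt{pq}\, d\nu = \mathbb{E}_{\mathbb{P}}\!\left[ \exp\!\Big( \tfrac12 \log \tfrac{q}{p} \Big) \right] \geq \exp\!\Big( \tfrac12\, \mathbb{E}_{\mathbb{P}}\!\big[ \log \tfrac{q}{p} \big] \Big) = \exp\!\Big( -\tfrac12 \text{KL}(\mathbb{P},\tilde{\mathbb{P}}) \Big),
\end{equation*}
so $\big( \int \sqrt{pq}\, d\nu \big)^2 \geq \exp(-\text{KL}(\mathbb{P},\tilde{\mathbb{P}}))$, and chaining the three displays yields \eqref{eqn:kl}. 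I do not expect a serious obstacle here; the one step meriting care is the Cauchy–Schwarz estimate, where the point is that the "total mass" $\int \max(p,q)\, d\nu$ is bounded by $2$ (not arbitrarily large), which is exactly what produces the factor $\tfrac12$; and one should keep the integrals restricted to $\{p > 0\}$ so that the passage to $\mathbb{E}_{\mathbb{P}}$ and the application of Jensen are justified.
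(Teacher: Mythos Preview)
Your proof is correct and follows the standard textbook argument (reduce to the overlap $\int\min(p,q)\,d\nu$, pass to the Hellinger affinity via Cauchy--Schwarz, then lower bound the affinity by $\exp(-\tfrac12\KL)$ using Jensen). The paper does not supply its own proof of this lemma but simply cites Tsybakov's book, where essentially the same chain of inequalities appears, so there is nothing further to compare.
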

The proof can be found in the book of \cite{Tsybakov:2008:INE:1522486}. When $\text{KL}(\mathbb P, \tilde{\mathbb P})$ is small, we may expect the probability measure $\mathbb P$ is close to the probability measure $\tilde{\mathbb P}$. Note that $\mathbb P(\cD) + \mathbb P(\cD^c)=1$. If $\tilde{\mathbb P}$ is close to $\mathbb P$, we may expect $\mathbb P(\cD)+\tilde{\mathbb P}(\cD^c)$ to be large.

\begin{lemma}[Divergence decomposition]\label{lem:inf-processing}
Let $\mathbb P$ and $\tilde{\mathbb P}$ 
be two probability measures on the sequence  $(A_1, Y_1,\ldots,A_n,Y_n)$ for a fixed
bandit policy $\pi$ interacting with a linear contextual bandit with standard Gaussian noise and parameters $\theta$ and $\tilde{\theta}$ respectively. Then the KL divergence of $\mathbb P$ and $\tilde{\mathbb P}$ can be computed  exactly and is given by
\begin{equation}\label{eqn:inf-processing}
\text{KL}(\mathbb P, \tilde{\mathbb P}) = \frac12 \sum_{x \in \mathcal A} \mathbb E[T_x(n)]\, \langle x, \theta - \tilde{\theta}\rangle^2\,,
\end{equation}
where $\mathbb E$ is the expectation operator induced by $\PP$. 
\end{lemma}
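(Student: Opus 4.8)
The plan is to run the standard chain-rule argument for the relative entropy of the law of a sequentially generated process. First I would write densities for $\mathbb P$ and $\tilde{\mathbb P}$ with respect to a common product reference measure on $(A_1, Y_1, \ldots, A_n, Y_n)$: at round $t$ the action $A_t$ is drawn from the policy kernel $\pi_t(\cdot \mid A_1, Y_1, \ldots, A_{t-1}, Y_{t-1})$, which is \emph{identical} under both measures, and then $Y_t \sim \mathcal N(\langle A_t, \theta\rangle, 1)$ under $\mathbb P$ versus $Y_t \sim \mathcal N(\langle A_t, \tilde\theta\rangle, 1)$ under $\tilde{\mathbb P}$. Hence $d\mathbb P / d\tilde{\mathbb P}$ factorizes as a product over $t$, and because the policy factors cancel, $\log(d\mathbb P/d\tilde{\mathbb P})$ reduces to $\sum_{t=1}^n \log\big( f_{\langle A_t,\theta\rangle}(Y_t) \big/ f_{\langle A_t,\tilde\theta\rangle}(Y_t)\big)$, where $f_\mu$ denotes the $\mathcal N(\mu,1)$ density.

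Next I would take expectations under $\mathbb P$ and apply the tower rule, conditioning on the $\sigma$-algebra generated by $A_1, Y_1, \ldots, A_{t-1}, Y_{t-1}, A_t$. Given this information the action $A_t$ is determined and $Y_t$ is genuinely $\mathcal N(\langle A_t,\theta\rangle,1)$ under $\mathbb P$, so the conditional expectation of the $t$-th log-likelihood-ratio term equals $\text{KL}\big(\mathcal N(\langle A_t,\theta\rangle,1),\, \mathcal N(\langle A_t,\tilde\theta\rangle,1)\big) = \tfrac12 \langle A_t, \theta-\tilde\theta\rangle^2$, using the closed form for the relative entropy between two unit-variance Gaussians. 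Summing over $t$ gives $\text{KL}(\mathbb P,\tilde{\mathbb P}) = \tfrac12 \sum_{t=1}^n \mathbb E\big[\langle A_t, \theta-\tilde\theta\rangle^2\big]$.

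Finally I would convert the sum over rounds into a sum over arms. Since $\langle A_t, \theta-\tilde\theta\rangle^2 = \sum_{x\in\mathcal A}\ind\{A_t = x\}\,\langle x, \theta-\tilde\theta\rangle^2$ (a finite sum, as $\mathcal A$ is finite in the constructions where the lemma is applied) and $T_x(n) = \sum_{t=1}^n \ind\{A_t = x\}$, interchanging the finite sums and taking expectations yields $\text{KL}(\mathbb P,\tilde{\mathbb P}) = \tfrac12 \sum_{x\in\mathcal A} \mathbb E[T_x(n)]\,\langle x, \theta-\tilde\theta\rangle^2$, which is the claim. The only delicate point is the measure-theoretic bookkeeping in the first step — exhibiting both laws as densities against a common reference measure so that the chain rule for relative entropy applies and the policy kernels cancel exactly; everything afterwards is the one-line Gaussian KL computation together with a reindexing.
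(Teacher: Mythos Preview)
Your argument is correct and is exactly the standard chain-rule proof: factor the joint density, cancel the identical policy kernels, reduce each round to the closed-form Gaussian relative entropy $\tfrac12\langle A_t,\theta-\tilde\theta\rangle^2$, and reindex by arm. The paper does not actually prove this lemma but defers to Lemma~15.1 of Lattimore and Szepesv\'ari, where precisely this argument appears.
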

This lemma appeared as Lemma 15.1 in the book of \cite{lattimore2018bandit}, where the reader can also find the proof.

\end{document}